\def\eqref#1{equation~\ref{#1}}
\def\1{\bm{1}}
\DeclareMathAlphabet{\mathsfit}{\encodingdefault}{\sfdefault}{m}{sl}
\SetMathAlphabet{\mathsfit}{bold}{\encodingdefault}{\sfdefault}{bx}{n}
\newcommand{\equalcontrib}{\thanks{Equal contribution.}}
\newcommand\revision[1]{\begin{color}{black} #1 \end{color}}
\newcommand{\bigbox}{\mathop{\vcenter{\hbox{\tikz \draw (0,0) rectangle (0.3,0.3);}}}\displaylimits}
\crefname{theorem}{theorem}{theorems}
\crefname{proposition}{proposition}{theorems}
\crefname{lemma}{lemma}{theorems}
\crefname{infoprop}{informal proposition}{theorems}
\newcommand{\ourmethod}{\texttt{CHARM}\xspace}
\newcommand{\ourmethodshort}{\texttt{CHARM}\xspace}
\newcommand{\att}[4]{\alpha^{#1,#2}_{#3,#4}}
\newcommand{\attgeneric}[2]{\alpha_{#1,#2}}
\newcommand{\attvec}[2]{\boldsymbol{\alpha}_{#1,#2}}
\newcommand{\actvec}[2]{\mathbf{a}^{#1}_{#2}}
\newcommand{\actprobe}[1]{\texttt{Act}-{#1}\xspace}
\newcommand{\look}{\texttt{Lookback Lens}\xspace}
\newcommand{\lookshort}{\texttt{LkbLens}\xspace}
\newcommand{\nodeavg}{\texttt{Neigh-Avg(N)}\xspace}
\newcommand{\edgeavg}{\texttt{Neigh-Avg(E)}\xspace}
\newcommand{\probas}{\texttt{Probas}\xspace}
\newcommand{\llmcheck}[1]{\texttt{LLM-Chk}-{#1}\xspace}
\newcommand{\llmcheckpp}[1]{\texttt{LLM-Chk++}-{#1}\xspace}
\newcommand{\lapeig}{\texttt{LapEig}\xspace}
\title{Neural Message-Passing on Attention Graphs for Hallucination Detection}
\author{Fabrizio Frasca\equalcontrib \\
Technion \\
\texttt{fabriziof@campus.technion.ac.il} \\
\And
Guy Bar-Shalom\footnotemark[\value{footnote}] \\
Technion \\
\texttt{guybs99@gmail.com} \\
\And
Yftah Ziser \\
University of Groningen, NVIDIA Research \\
\And
Haggai Maron \\
Technion, NVIDIA Research \\
}
\begin{document}

\maketitle

\begin{abstract}
    
Large Language Models (LLMs) often generate incorrect or unsupported content, known as hallucinations. Existing detection methods rely on heuristics or simple models over isolated computational traces such as activations, or attention maps. We unify these signals by representing them as \emph{attributed graphs}, where tokens are nodes, edges follow attentional flows, and both carry features from attention scores and activations. Our approach, \ourmethod, casts hallucination detection as a graph learning task and tackles it by applying GNNs over the above attributed graphs. We show that \ourmethod provably subsumes prior attention-based heuristics and, experimentally, it consistently outperforms other leading approaches across diverse benchmarks. Our results shed light on the relevant role played by the graph structure and on the benefits of combining computational traces,  whilst showing \ourmethod exhibits promising zero-shot performance on cross-dataset transfer.

\end{abstract}

\section{Introduction}

Despite their impressive capabilities, LLMs frequently produce outputs that are factually inaccurate, logically inconsistent, or unsupported by the input context, broadly referred to as \emph{hallucinations} \citep{pagnoni-etal-2021-understanding,cao-etal-2022-hallucinated,qiu-etal-2023-detecting}.  
As LLMs are increasingly applied in diverse domains, detecting hallucinations becomes crucial for ensuring their safe and reliable use. This phenomenon is inherently complex and multi-faceted, and methods for \emph{automated hallucination detection} (HD) have recently received significant attention \citep{yin2024characterizing,bar2026learning}.

A straightforward approach for HD is to query LLMs multiple times, either by asking them to judge their own outputs~\citep{kadavath2022language} or by sampling alternative generations to measure semantic variability~\citep{kuhn2023semantic}. While effective in some cases, this strategy requires repeated rollouts, making it both slow and computationally expensive, and thus unsuitable for real-time or large-scale use. A more scalable line of work leverages the internal signals produced by LLMs during decoding, which we refer to as \emph{computational traces}. In particular, most works focus on \emph{linearly} probing residual stream activations on selected layers and token positions~\citep{orgad2025llms,azaria2023internal,belinkov2022probing}. More recently, attention maps have shown to provide an additional perspective on model behaviour, e.g., by leveraging prompt-response attention ratios~\citep{chuang2024lookback}. Although providing meaningful, alternative cues on hallucinations existing attention-based techniques rely on simple models or handcrafted \emph{heuristics}~\citep{sriramanan2024llmcheck,binkowski2025hallucination}. 
Furthermore, all the above methods treat computational traces in isolation, despite capturing complementary aspects of hallucinations. To date, a systematic exploration of the interplay of computational traces is still lacking.
More broadly, the field currently lacks a framework applying modern deep learning techniques to structured, holistic representations of computational traces, leaving the community to rely on heuristic, single-signal approaches.

In this paper, we propose a unified framework that represents LLM computational traces as \emph{attributed graphs}, a natural, yet under-explored perspective in the HD literature. Similarly to recent works dealing with the analysis of learnt attention computational flows~\citep{barbero2024transformers,el2025towards}, this formulation considers tokens as nodes and draws connections between them based on the structure of attention maps calculated during text generation. Crucially, both nodes and edges can be endowed with features derived from the values of computational traces across layers (and heads): node features capture token-wise signals such as activations and self-scores (the attention a token assigns to itself), while edge features encode pairwise interactions, most prominently the attention between distinct tokens.
This perspective casts HD as a \emph{graph learning} problem, which has recently obtained successes in broad-ranging domains~\citep{monti2019fake,gonzalez2021predicting,liu2023deep} and, we argue, is well suited to this task. First, representing computational traces as attributed graphs allows to naturally integrate heterogeneous signals, which may hold varying predictive value across generation tasks. Second, the framework accommodates different levels of detection granularity, with the standard setup of graph classification corresponding to response-level detection, and that of node classification to the token-level one. Finally, this formulation directly leverages the rich body of work on Graph Neural Networks (GNNs) and their code-libraries~\citep{fey2019fast}, providing principled and well-studied tools for tailored HD models.

Motivated by these advantages, we introduce \ourmethod (see~\Cref{fig:pipeline}), an HD approach based on a Graph Neural Network operating on computational trace graphs~\citep{gilmer2017neural}. Our framework can jointly process different computational traces, and subsumes known detection heuristics: we prove that it can express recent attention-based methods~\citep{chuang2024lookback,sriramanan2024llmcheck} either at the token or response granularity levels.
Experimentally, \ourmethod \emph{consistently outperforms} these heuristics, as well as other leading methods across benchmarks and detection resolutions. Our analyses further reveal that incorporating activations into computational trace graphs alongside attention-features may improve detection of non-contextual hallucinations. Beyond state-of-the-art comparisons, our ablation studies demonstrate the importance of the graph structure, the main principle driving \ourmethod. Finally, we report promising zero-shot cross-dataset transfer results and observe robustness to graph sparsifications, indicating viable trade-offs between accuracy and efficiency.

\begin{figure}[t]
    \centering
    \includegraphics[width=\textwidth]{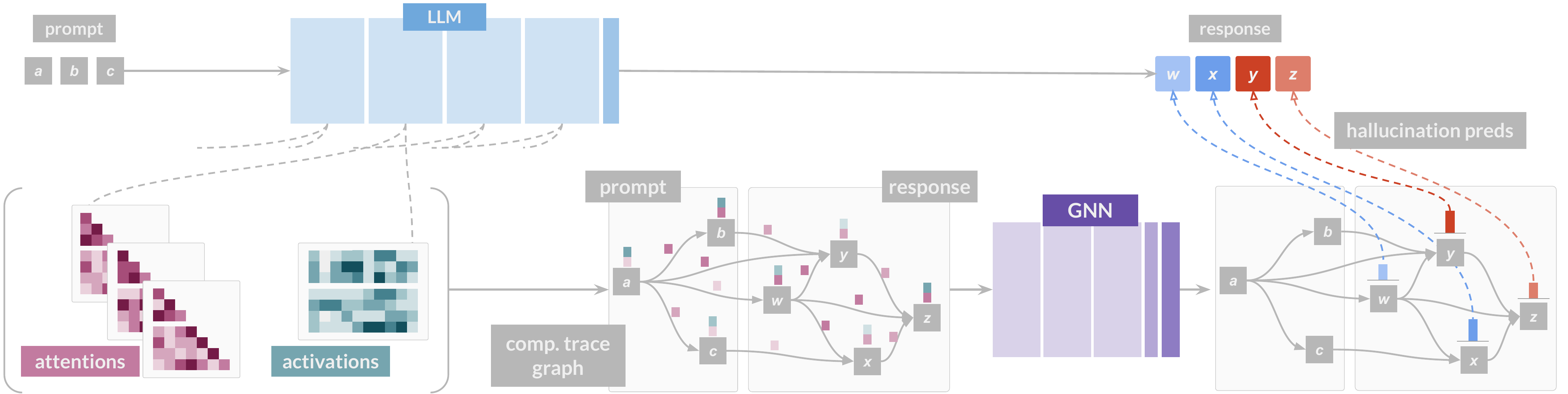}
    \caption{\textbf{Overview of \ourmethod.} 
    We extract attention and activation matrices from LLM computations and build an attributed graph from them: edges and their features are derived from off-diagonal attention scores; node features are based on activations, and diagonal attention values. The resulting graph is processed by a GNN-based architecture, which outputs either token-level hallucination scores (as illustrated) or a global hallucination score for the entire sentence.}
    \label{fig:pipeline}
\end{figure}

\paragraph{Contributions} are summarised as follows. (1) We introduce a unified view of LLM \emph{computational traces} as \emph{attributed graphs}, where tokens are nodes connected by attention-induced edges, and both nodes and edges are enriched with features such as activations and attention scores. (2) We introduce \ourmethod, which casts HD as graph learning on computational trace graphs. It uses a GNN that provably subsumes attention-based heuristics, opening new application frontiers for machine learning on graphs. (3) We show that \ourmethod consistently outperforms leading HD methods across diverse benchmarks and granularities, while exhibiting promising zero-shot transfer capabilities.

\section{Related Work}

\paragraph{Hallucinations and their detection in LLMs.} The term \enquote{hallucinations} in LLMs broadly refers to errors in text generation where outputs are unfaithful to the input or external facts~\citep{orgad2025llms}. These include knowledge inaccuracies, flawed reasoning, biases, and references to non-existing \revision{~\citep{liu2022token,huang2023survey,ji2023survey,rawte2023troubling}}. Hallucinations can involve complex failures and manifest in subtle ways, including at the granularity of single tokens~\citep{orgad2025llms}. Early detection approaches leverage uncertainty measures in next-token prediction or semantic consistency of responses~\citep{kadavath2022language,varshney2023stitch,kuhn2023semantic,manakul2023selfcheckgpt}. Alternatively, recent work propose detectors on LLM computational traces; prominent examples include (hidden) activations~\citep{kadavath2022language,snyder2024early,yuksekgonul2024attention,zou2023representation,yin2024characterizing,chen2024inside,simhi2024constructing,li2024inference,marks2024geometry,burns2023discovering,rateike2023weakly,bar2025beyond} and attention matrices~\citep{sriramanan2024llmcheck,chuang2024lookback,binkowski2025hallucination,bazarova2025hallucination,zhang2023enhancing}. Different traces may be more or less informative for different types of hallucinations: e.g., attention-based heuristics have been evidenced to be predictive in contextual hallucination settings~\citep{chuang2024lookback}. Existing methods mostly rely on heuristics or simple classifiers on specific traces; some are also constrained to coarse detection levels (e.g., whole responses)~\citep{sriramanan2024llmcheck,binkowski2025hallucination,kuhn2023semantic}.

\paragraph{Attention-based HD.} Irregular or skewed attention behaviours have been observed to often signal pathological text generation~\citep{xu2023understanding,chuang2024lookback,binkowski2025hallucination,bazarova2025hallucination}. E.g., hallucinated translations may exhibit localised scores on narrow context windows~\citep{xu2023understanding}; hallucinations in contextual question answering may correlate with excessive focus on response tokens w.r.t.\ context ones~\citep{chuang2024lookback}. The recent \emph{Lookback Lens} proposes a detection feature based on this intuition. Other works extract spectral or structural features from attention matrices, e.g., via graph Laplacians combined with logistic regression~\citep{binkowski2025hallucination}. These approaches, however, remain limited by fixed heuristics and shallow classifiers. Our work generalises this line by employing attention matrices to construct attributed graphs, a formulation that supports predictions at multiple levels of granularity, integrates additional computational signals and unlocks the application of modern (graph-based) deep learning techniques.

\paragraph{Graphs of LLM computation and Graph Neural Networks.} Recent works have applied graph-theoretic perspectives to neural computations~\citep{vitvitskyi2025what}, often graphs induced by attention matrices~\citep{barbero2024transformers,el2025towards}. Notably, \citet{barbero2024transformers} analyse signal propagation on attention graphs, uncovering phenomena such as representational collapse and \emph{oversquashing}~\citep{alon2020bottleneck,topping2022oversquashing}. These studies highlight the value of attention graphs, but are limited to descriptive and structural analyses. In contrast, we extend attention graphs to more general \emph{attributed graphs} to integrate other computational traces and, importantly, we propose to directly \emph{learn} on these graphs for the task of HD. To this end, we leverage Graph Neural Networks~\citep{kipf2017semi,gilmer2017neural,battaglia2018relational}, a family of architectures which have recently achieved remarkable results in relevant structured domains~\citep{qasim2019learningrepresentations,monti2019fake,stokes2020adeep,gonzalez2021predicting,liu2023deep}.

\section{LLM Computational Traces as Attributed Graphs}\label{sec:comp_trace_graphs}

\paragraph{Preliminaries.} Throughout this paper, we focus on attention-based, decoder-only LLMs. Abstracting away architectural specifics, we treat them as sharing a common backbone: a stack of transformer-decoder blocks.
Let $\mathcal{L}$ denote a reference LLM consisting of $L$ decoder-block layers of $H$ heads each\footnote{One can also consider, without loss of generality, a different number of heads for each layer.}, $\vec{p}$ refer to a prompt in input, and $\vec{r}$ to the response $\mathcal{L}$ generates. We consider $\vec{p}, \vec{r}$ to be sequences of tokens of size, resp., $n_p, n_r$ ($n:=n_p+n_r$). We use $T_i$ to refer to token at position $i$ in the concatenation $\vec{p} \mid \vec{r}$. Within each transformer block, multi-head attention produces scores, which we collect in attention matrices $A^{l,h} \in [0,1]^{n \times n}$ for layer $l$ and head $h$. Due to the causal structure of decoder-only transformers, these matrices are lower-triangular. For convenience, we define $\attvec{i}{j} \in [0,1]^{L \cdot H}$ as the vector of attention scores between $T_i$ and $T_j$ across all layers and heads. 
In addition to multi-head attention values, residual stream activations constitute another key source of information about the computation performed by $\mathcal{L}$. For each token $T_i$, we denote by $\actvec{l}{i} \in \mathbb{R}^{d}$ its $d$-dimensional activation vector at layer $l$; this captures the computational state of the model at such token position and processing stage. Together, attention values and activations form the primary signals we use to describe the computational traces of LLMs. While our framework focuses on these two, it can also naturally accommodate additional sources of information, such as logits.

\paragraph{From computational traces to attributed graphs.} 

The attention values calculated along the way are, in fact, \emph{pairwise scores} that induce a (non-symmetric) binary relation between tokens. In fact, they define a \emph{directed graph} $G = (V, E)$ on any sequence of tokens $\vec{s} = \vec{p} \mid \vec{r}$, where:
\begin{itemize}
    \item $V$, the node (vertex) set, is the set of all tokens in $\vec{s}$, namely $\{T_i\}_{i=0}^{n-1}$;
    \item $E$, the edge set, is the set of ordered pairs $(T_i, T_j), i > j$, signifying $T_i$ attends to $T_j$ in the generation of next tokens: $\att{l}{h}{i}{j} > 0$ for some of $\mathcal{L}$'s layers and corresponding heads.
\end{itemize}
We consider these graphs as \emph{attributed}, in the sense that nodes and edges can host \emph{features} representing $\mathcal{L}$'s computational traces. Edge features are given by the set of attention scores between \emph{distinct} tokens, i.e., $x_{E,(i,j)} = \attvec{i}{j}$'s, with $i \neq j$. Node features are given by the attention scores \enquote{paid} by a token to itself, i.e, $x_{V,i} = \attvec{i}{i}$. Node features can also host other token-wise computational traces; we consider residual stream activations $\actvec{l}{i}$ at any layer $l$, so that node / token $T_i$ is endowed with feature vector $x_{V,i} = (\attvec{i}{i} \mid \actvec{l}{i})$. Formally, we gather node and edge features in matrices $X_V \in \mathbb{R}^{n \times (L \cdot H + d)}$ (or $\mathbb{R}^{n \times (L \cdot H)}$ should activations be neglected), and $X_E \in \mathbb{R}^{n_E \times L \cdot H}$. The resulting graph is $G = (V, E, X_V, X_E)$. This representation captures both token interactions and per-token computational states, and can be extended to incorporate other traces, e.g., activations from multiple layers or output logits. We leave investigating these aspects to future research endeavours.

\paragraph{Sparsifying computational trace graphs.} Very small attention scores convey noisy and weak contribution to updating the representation of a token. To reduce computational overhead, we threshold attention scores at $\tau$, zeroing values below it and dropping edges unsupported by any head or layer after this process. In formulae, new graph is defined as $G = (V, E, X_V, X^\tau_E)$, with:
\begin{equation}\label{eq:graph_sparse}
    (X^\tau_E)_{(i,j),(l,h)} =
    \begin{cases}
    0 & \text{if } \att{l}{h}{i}{j} \le \tau,\\
    \att{l}{h}{i}{j} & \text{otherwise.}
    \end{cases} \quad E = \{ (T_i, T_j) \mid i>j \text{ and } \exists d\text{ s.t. } (X^\tau_E)_{(i,j),d} > 0 \}.
\end{equation}
As we experimentally show in~\Cref{sec:exp_additional}, sparsifying the graph in this way may significantly improve the efficiency of our yet-to-be-described model, while retaining information about the most relevant token interactions.
In the next section we illustrate how, starting from the above formalism, we can instantiate problems such as automated HD as graph learning tasks.

\section{Neural Message Passing for Hallucination Detection}

\subsection{Hallucination Detection is a Graph Machine Learning Task}\label{sec:hd_is_gml}

\paragraph{Problem formulation.} Computational trace graphs can be naturally associated with \emph{labels} one seeks to predict for the underlying text generation process. In our specific use-case of HD, these can indeed reflect hallucination annotations at the level of response tokens or the overall response. Concretely, our reference graph $G$ --- encoding the computation of $\mathcal{L}$ on prompt $\vec{p}$ and response $\vec{r}$ as per the above~\Cref{sec:comp_trace_graphs} --- can be \emph{annotated} as $(G, y)$, where:
\begin{align}
    \text{(i)} \quad y \in \{0,1\}, &y =
    \begin{cases}
    1, & \text{if $\vec{r}$, contains \emph{hallucinating} passages} \\
    0, & \text{otherwise}
    \end{cases}, \quad \text{or} \nonumber \\
    \text{(ii)} \quad y \in \{0,1\}^{n_r}, &y_i =
    \begin{cases}
    1, & \text{if token $T_i, i>n_p$ is part of a \emph{hallucinating} passage within $\vec{r}$}, \\
    0, & \text{otherwise}
    \end{cases}. \nonumber
\end{align}
Here, (i) stands for a \emph{graph-wise} label, while (ii) represents labels at the granularity of single (response) tokens.
With these premises, we formalise HD as learning a parametric function $f(G) = \hat{y}$ mapping a computational trace graph $G$ to predictions $\hat{y}$ of the corresponding labels $y$. Depending on the task, $\hat{y} \in [0,1]^k$ with $k = 1$ for graph-wise or $k = n_r$ for token-wise detection.

\begin{figure}[t]
    \centering
    \includegraphics[width=\textwidth]{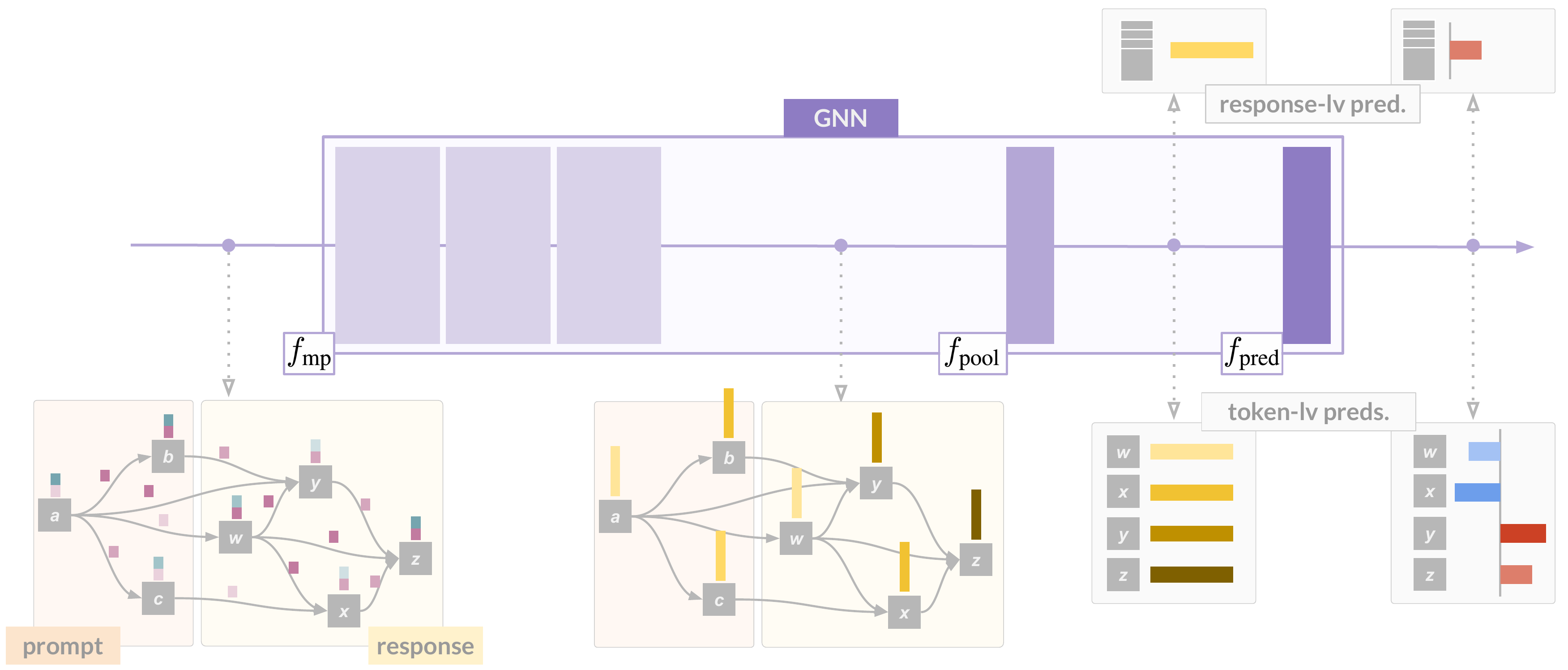} % replace with your image
\caption{\textbf{HD with \ourmethod.} The input is an attributed graph (shown in the bottom left). First $f_\text{mp}$ obtains refined node / token representations via msg-passing. Next, $f_\text{pool}$ aggregates these if response level predictions are required. Finally, a projection head, $f_\text{pred}$, outputs the detection score.}
    \label{fig:gnn_pipeline}
\end{figure}
\paragraph{Our \ourmethod architecture.} We parameterise $f$ in the family of message-passing Graph Neural Network (GNN)~\citep{kipf2017semi,gilmer2017neural}. These networks are, to date, the de-facto standard for learning on attributed graphs, while possessing an architectural pattern which, as we show next, well aligns them to generalise known approaches. Message-passing networks, in particular, implement local computations reflecting the structure of the input graph, hence benefitting from the aforementioned sparsification and offering a compelling advantage in terms of computational complexity. In particular, we structure $f$ as: $f = f_\text{pred} \circ f_\text{pool} \circ f_\text{mp}$ (see~\Cref{fig:gnn_pipeline}), where:
\begin{itemize}
    \item $f_\text{mp}$ stacks learnable message-passing layers to compute updated token representations from the input graph $G$;  
    \item $f_\text{pool}$ aggregates token representations into a single graph representation (e.g., via averaging or summation), or acts as the identity for token-wise detection;  
    \item $f_\text{pred}$ applies dense layers to compute graph- or token-wise hallucination predictions.
\end{itemize}
Starting from the original node (token) features $X_V$, each layer in $f_\text{mp}$ calculates and updates hidden node representations by aggregating them (possibly, non-linearly) along the connectivity defined by the attention scores as per~\Cref{eq:graph_sparse}. That is, the $t$-th layer updates token $i$'s embedding as:
\begin{equation}\label{eq:totem-msg-pass}
    h^{(t+1)}_{i} = \text{up}_t \Big ( h^{(t)}_{i}, \bigbox_{j:\ (i,j) \in E} \text{msg}_t \big ( h^{(t)}_{i}, h^{(t)}_j, x^{\tau}_{E, (i,j)}, p_{i,j} \big ) \Big )
\end{equation}
\noindent where: $\bigbox$ is a permutation invariant\footnote{Permutation invariance ensures that calculated message does not depend on the ordering of nodes in the neighbourhood.} aggregator such as sum, average, or max; $x^{\tau}_{E, (i,j)}$ corresponds to the features of edge $(i,j)$ in $X^{\tau}_E$; $h^{(0)}_{i} = x_{V,i}$ are the initial node features; $p_{i,j}$ is a one-hot vector indicating whether edge $(i,j)$ connects prompt to response or response to response tokens. Functions $\text{up}_t, \text{msg}_t$ are parameterised as Multi-Layer Perceptrons (MLPs) running on the concatenation of their arguments. We refer to our overall approach as \ourmethod, a mnemonic formula for \enquote{\underline{C}atching \underline{HA}llucinated \underline{R}esponses via (learnable) \underline{M}essage-passing}.

This formulation offers two main advantages. First, it provides a unified framework capable of handling both token-level and response-level HD, in contrast to prior approaches that target a single granularity. Second, by leveraging message-passing over computational trace graphs, \ourmethod can flexibly integrate multiple signals and naturally subsume heuristic-based detectors. As we exemplify next, rather than discarding prior meaningful intuitions, our approach can generalise them.

\subsection{Expressiveness}

Here, we demonstrate how hand-crafted heuristics emerge as special cases of our \ourmethod. To illustrate this, we focus on two representative methods: (i) Lookback Lens~\citep{chuang2024lookback}, which produces token-wise hallucination scores, and (ii) LLM-Check~\citep{sriramanan2024llmcheck}, which outputs global sentence-level (graphwise) scores. We show that both heuristics can be approximated, to arbitrary precision and under mild assumptions, by \ourmethod, highlighting its expressiveness.

\paragraph{Lookback Lens} extracts, for each \emph{response} token $i = n_p, \dots , n_r+n_p-1$, layer $l$ and head $h$, a heuristic feature $\ell^{l,h}_i$ quantifying the average proportion of attention paid to prompt w.r.t.\ previously generated response tokens~\citep{chuang2024lookback}. Precisely, \look outputs scores:
\begin{equation}\label{eq:lookback}
    P^{l,h}_i = \frac{1}{n_p} \sum_{j=0}^{n_p-1} \alpha^{l,h}_{i,j}\ , \quad R^{l,h}_i = \frac{1}{i - n_p} \sum_{j=n_p}^{i-1} \alpha^{l,h}_{i,j}\ ; \qquad \ell^{l,h}_i = \frac{P^{l,h}_i}{P^{l,h}_i + R^{l,h}_i}.
\end{equation}
\noindent where $P^{l,h}_i, R^{l,h}_i$ correspond to the average attention paid by response tokens to, resp., the prompt and the (previously) generated response. We argue that this heuristic can, in fact, be interpreted in the form of message-passing on the (non-thresholded) attention graph described in~\Cref{sec:comp_trace_graphs}, and can thus be captured by our approach:
\begin{restatable}{infoprop}{lookbackapproxinformal}\label{prop:lookback_approx_informal}
    Equipped with a single-layer message-passing stack $f_{\text{mp}}$, \ourmethod running on a non-tresholded computational trace graph ($\tau=0$) can arbitrarily well approximate token-wise \look features $\ell_i$ for bounded prompt and response lengths. 
\end{restatable}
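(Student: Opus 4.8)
The plan is to show that the Lookback Lens feature $\ell^{l,h}_i$ can be realized, up to arbitrary precision, by a single message-passing layer of the form in \Cref{eq:totem-msg-pass} followed by $f_\text{pred}$, with $f_\text{pool}$ acting as the identity (since these are token-wise scores). The key observation is that $\ell^{l,h}_i$ is a \emph{ratio} of two neighbourhood aggregates: the numerator $P^{l,h}_i$ averages edge attention features over prompt-neighbours of $i$, and the denominator $P^{l,h}_i + R^{l,h}_i$ averages over all (prompt and response) neighbours. Both quantities are exactly the kind of permutation-invariant neighbourhood sums that message passing computes natively. The edge indicator $p_{i,j}$, which distinguishes prompt-to-response from response-to-response edges, is precisely what lets the message function route attention scores into the correct accumulator.

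First I would fix a target layer-head pair $(l,h)$ and design $\text{msg}_t$ so that, reading off the relevant coordinate of the edge feature $x_{E,(i,j)} = \attvec{i}{j}$ together with the one-hot $p_{i,j}$, it emits a two-dimensional message $\big(\alpha^{l,h}_{i,j}\cdot \mathbb{1}[j<n_p],\ \alpha^{l,h}_{i,j}\big)$ — the first coordinate gating on whether $j$ is a prompt token, the second always active. With $\bigbox$ chosen as \emph{sum}, aggregating over the neighbourhood $\{j : (i,j)\in E\}$ yields the pair $\big(n_p P^{l,h}_i,\ n_p P^{l,h}_i + (i-n_p) R^{l,h}_i\big)$ at node $i$ (using $\tau=0$ so that \emph{every} causally-valid predecessor $j<i$ is an edge and no attention mass is dropped). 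Since $\text{msg}_t, \text{up}_t$ are MLPs acting on concatenated arguments and a gating-by-indicator map is affine in $p_{i,j}$, a standard universal-approximation argument gives these two coordinates to arbitrary accuracy; the update $\text{up}_t$ simply passes the aggregate through.

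The remaining step is to recover $\ell^{l,h}_i = P^{l,h}_i / (P^{l,h}_i + R^{l,h}_i)$ from the aggregated pair. Note the counts $n_p$ and $(i-n_p)$ are awkward: the raw sums carry these factors, so I would instead normalize. One clean route is to additionally accumulate the neighbour-count (by having $\text{msg}_t$ emit a constant $1$, or a prompt-gated $1$), letting $f_\text{pred}$ form the two \emph{averages} $P^{l,h}_i$ and $P^{l,h}_i+R^{l,h}_i$ by division, then the final ratio. Since all of this composition — two aggregated averages and their quotient — is a continuous map on a compact domain, it reduces to approximating a continuous function by the MLP $f_\text{pred}$, which is where I invoke universal approximation a second time. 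This is exactly the place the \textbf{boundedness hypotheses} (bounded prompt and response lengths) do their work: they guarantee the inputs to $f_\text{pred}$ live in a compact set, and — crucially — that the denominator $P^{l,h}_i + R^{l,h}_i$ is bounded \emph{away from zero}, so the quotient is continuous and the approximation is uniform.

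The main obstacle is precisely this division by $P^{l,h}_i + R^{l,h}_i$: universal approximation of MLPs is a statement about continuous functions on compact sets, but $x \mapsto 1/x$ blows up near the origin, so I must argue the denominator is uniformly bounded below. I expect to handle this by either (a) the mild assumption that attention rows sum to one and that each response token attends with some minimal nonzero mass to its causal predecessors (so $P^{l,h}_i + R^{l,h}_i$ is bounded below by a positive constant), or (b) restricting to the response tokens $i>n_p$ for which at least the prompt neighbourhood is nonempty, guaranteeing $P^{l,h}_i>0$. Assembling the coordinates across all $(l,h)$ pairs into a single message of dimension $O(L\cdot H)$ is then routine, and bounded lengths keep the whole construction uniform. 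A minor secondary point is the $i=n_p$ edge case where $R^{l,h}_i$ has an empty sum; this is absorbed by the same boundedness-below argument applied to $P^{l,h}_i$ alone.
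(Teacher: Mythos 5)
Your proposal is correct and follows essentially the same route as the paper's proof: sum aggregation over the unthresholded causal neighbourhood, using the mark $p_{i,j}$ to gate attention features into separate prompt/response accumulators together with neighbour counts, and then a universal-approximation argument for the continuous ratio on a compact domain with the denominator bounded away from zero thanks to the bounded-length assumption. The only differences are cosmetic: the paper realises the gating \emph{exactly} with an explicit two-layer ReLU MLP (the indicator product is not affine in $p_{i,j}$, but $\mathrm{ReLU}(\alpha + (p_{i,j})_0 - 1)$ implements it exactly on the binary mark domain, so no error accumulates through the neighbourhood sum), and it places the final ratio in $\text{up}_0$ rather than $f_\text{pred}$, also folding the self-score node feature into the response accumulator via an affine pre-layer.
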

\emph{Proof idea.} The ability to perform such approximation relies on the following: (1) aggregation over previous tokens, as required by $P^{l,h}_i$'s  and $R^{l,h}_i$'s, is naturally captured by propagating (and aggregating) attention features on the neighbourhoods of the directed attention graph as per~\Cref{eq:totem-msg-pass}; (2) conditioned on mark input $p_{i,j}$, MLP $\text{msg}$ can differently route attention features from prompt vs.\ response tokens in \emph{separate} subspaces of the internal representations; (3) message summation can separately accumulate attention to prompt versus response tokens (\emph{non-normalised} $P^{l,h}_i, R^{l,h}_i$); (4) MLP $\text{up}$ can normalise and combine these aggregated scores as required, calculating the ratio in~\Cref{eq:lookback}. A formal statement of~\cref{prop:lookback_approx_informal} is, along with its proof, in~\Cref{app:expr}.

\paragraph{LLM-Check} proposes to detect hallucinations at the level of \emph{entire responses}: for a chosen LLM layer $l$, \llmcheck{l} obtains an ``Attention Score'' $c^l$ by averaging the log-determinants of attention matrices across the set of corresponding heads~\citep{sriramanan2024llmcheck}. Given the peculiar lower-triangular structure of these matrices, such scores can be calculated as:
\begin{equation}\label{eq:llmcheck}
    c^l = \frac{1}{H} \sum_{h=0}^{H-1} \sum_{i=0}^{n-1} \log ( \alpha^{l,h}_{i,i} ),
\end{equation}
\noindent i.e., by summing the $\log$-transformed attentions paid by each token to itself, averaged across heads. In practice, the inner summation is replaced with averaging, more robust to prompt and response lengths. We note, in our \ourmethod, these \enquote{self-scores} are gathered and processed as node features, which can be transformed and then later aggregated by our architecture to reproduce scores $c^{l}$'s.

\begin{restatable}{infoprop}{llmcheckapproxinformal}\label{prop:llmcheck_approx_informal}
    With a single-layer message-passing stack $f_{\text{mp}}$, \ourmethod can arbitrarily well approximate global \llmcheck{l} features $c^{l}$, provided attentions are clipped away from zero\footnote{This assumption ensures the $\log$ is continuous on an appropriate compact set, rendering its approximation amenable (see~\Cref{app:expr}); in practice we also did observe it was necessary to ensure numerical stability.}.  
\end{restatable}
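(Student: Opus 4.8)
The plan is to exploit the fact that the target score $c^l$ in \Cref{eq:llmcheck} depends \emph{only} on the diagonal self-scores $\att{l}{h}{i}{i}$, which in \ourmethod are stored precisely as node features (recall $x_{V,i} = (\attvec{i}{i} \mid \actvec{l}{i})$). Consequently, neither the edge features nor the graph connectivity are needed: the whole computation collapses to a node-wise transformation followed by a global aggregation. Concretely, I would define the per-node target $g_i := \frac{1}{H}\sum_{h=0}^{H-1} \log(\att{l}{h}{i}{i})$, so that $c^l = \sum_{i=0}^{n-1} g_i$ (or $\frac{1}{n}\sum_i g_i$ under the length-robust averaging convention mentioned after \Cref{eq:llmcheck}), and then show that a single message-passing layer can realise each $g_i$ on the nodes, with $f_\text{pool}$ performing the outer sum/average over all tokens and $f_\text{pred}$ acting as the identity.

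For the message-passing layer in \Cref{eq:totem-msg-pass}, I would render the aggregated message irrelevant — e.g. let $\text{msg}_t$ output a constant, and let the MLP $\text{up}_t$ zero out its message argument — so that the update reduces to a function of $h^{(0)}_i = x_{V,i}$ alone. The crux is then to approximate, via the MLP $\text{up}_t$, the map $\phi : (s_0,\dots,s_{H-1}) \mapsto \frac{1}{H}\sum_h \log(s_h)$ applied to the coordinates of $x_{V,i}$ corresponding to layer $l$ (ignoring activations and other layers). Here the clipping hypothesis enters decisively: with self-scores bounded away from zero, each $s_h$ lies in a compact interval $[\epsilon,1]$ with $\epsilon>0$, on which $\phi$ is continuous; the universal approximation theorem then furnishes an MLP approximating $\phi$ uniformly to within any $\delta>0$ on $[\epsilon,1]^H$.

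It remains to control how this per-node error propagates through pooling. With average pooling the output is $\frac{1}{n}\sum_i \hat g_i$ where each $|\hat g_i - g_i|\le\delta$, so the total error is at most $\delta$ \emph{independently} of the sequence length $n$ — matching the length-robust averaging convention and explaining why, unlike \Cref{prop:lookback_approx_informal}, no boundedness assumption on $n$ is required. (Under summation pooling one instead obtains an error of at most $n\delta$, still arbitrarily small once $n$ is bounded.) Choosing $\delta$ small enough thus drives the overall approximation error below any prescribed tolerance.

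The main obstacle, and the reason for the hypothesis, is approximating the logarithm itself: $\log s \to -\infty$ as $s \to 0^+$, so it is neither bounded nor uniformly continuous on $(0,1]$ and cannot be uniformly approximated by a continuous MLP there. Clipping the attentions away from zero confines $\phi$ to a compact domain bounded away from zero, restoring exactly the continuity and boundedness the universal approximation theorem requires; this is the assumption flagged in the statement. The remaining steps are routine bookkeeping — selecting the layer-$l$ coordinates out of $x_{V,i}$ and suppressing the message channel — which pose no difficulty given that $\text{up}_t$ and $\text{msg}_t$ are arbitrary MLPs.
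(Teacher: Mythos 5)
Your proposal is correct and follows essentially the same route as the paper's proof: constant (discarded) messages, universal approximation of the logarithm on the compact clipped domain inside $\text{up}_0$, aggregation over tokens, and a triangle-inequality argument showing the per-node error survives the averaging. The only difference is organisational --- you fold the head-averaging and layer selection into the update MLP and let $f_\text{pred}$ be the identity, whereas the paper approximates the component-wise $\log$ in $\text{up}_0$ and implements the exact head-averaging in an explicitly constructed $f_\text{pred}$ --- which changes nothing of substance.
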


\emph{Proof idea.} 
Intuitively: (1) $\text{up}$ MLPs can calculate the initial $\log$-transform on the features of the receiving node/token $i$ --- that is, on each token's \enquote{self-scores} --- while discarding information from neighbours; (2) $f_\text{pool}$, set to summation --- or averaging, if required --- can aggregate these values across tokens; (3) last, $f_\text{pred}$ can average these values across heads and selectively for the desired layer $l$ to implement the outer averaging in~\Cref{eq:llmcheck}. Again, a formal statement of~\cref{prop:llmcheck_approx_informal} is reported and proved in~\Cref{app:expr}.

Both the two above propositions guarantee that, while general and learnable, our approach can also provably default to known, hand-crafted heuristics (under mild, reasonable assumptions). This showcases the expressiveness of the \ourmethod framework, further evidencing how graph representations and message-passing networks can offer a valid and compelling perspective into the task of HD. 

\section{Experiments}\label{sec:exp}
We evaluate different aspects of learning with \ourmethod through the following research questions:  
\textbf{(Q1)} Is our GNN-based formulation effective in practice?
\textbf{(Q2)} Is \ourmethod effective at detecting different types of hallucinations and across different granularities (e.g., token-level and full-response)?
\textbf{(Q3)} \revision{Is the underlying graph structure crucially contributing for \ourmethod's performance?} 
\textbf{(Q4)} Can \ourmethod handle large/dense graphs?
\textbf{(Q5)} Can the combination of attention and activations be effective in \ourmethod?
\revision{\textbf{(Q6)} Does \ourmethod\ exhibit any zero-shot transferability across datasets?} 

We initialise training of \ourmethod with $3$ different random seeds, and, in the following, report the mean test performance along with std. %The complete codebase is provided in the supplementary material. 
All results are obtained by models optimising validation performance (AUPR, see below). Additional information, including dataset details, hyperparameter searches, implementation notes, and extended results, are available in \Cref{app:data,app:exp_ext,app:impl}\footnote{\revision{Code available at \url{https://github.com/Noired/charm}.}}. \revision{Last, we refer readers to \Cref{app:viz} for sample visualisations of the constructed computational trace graphs in input to \ourmethod along with token-wise hallucination predictions.}

\subsection{Contextual Token-Level Hallucination Detection}\label{sec:exp_token_lv}

\paragraph{Datasets.}
We first evaluate our approach at a token-level granularity on the NQ~\citep{kwiatkowski2019natural} and CNN~\citep{see2017get} datasets. These consists of prompt-response pairs with hallucination annotations available at the level of single response tokens (see \Cref{sec:hd_is_gml}). These pairs are obtained by prompting a target LLM to perform either document-based question answering (NQ) or text summarisation (CNN). These datasets contain instances of \emph{contextual hallucinations}: although the relevant and correct facts are provided in the input context, the target LLM is still observed to generate incorrect responses~\citep{chuang2024lookback}. Original generations and annotations for this dataset are derived from~\citep{chuang2024lookback}\footnote{Differently than~\citep{chuang2024lookback}, however, we construct and experiment with a different split ensuring full textual disjointness between train, test, and validation, see more in~\Cref{app:dataset_details_contextual}.}; coherently with the setup in the same work, we take \texttt{LLaMa-2-7B-chat} as the reference LLM on both datasets. More details are in~\Cref{app:dataset_details_contextual}.

\begin{wraptable}[19]{r}{0.62\textwidth}
    \vspace{-10pt}
    \centering
    \caption{Test AUROC and AUPR (\%) for NQ and CNN (token-wise, higher is better). \textbf{Bold}: best, \underline{Underlined}: runner-up. $^\dagger$: we also \emph{tune} \revision{the regularisation strength, differently than in the original work.}}
    \resizebox{0.62\textwidth}{!}{
    \begin{tabular}{rlcccc}
    \toprule
    & \multirow{2}{*}{\textbf{Method}} & 
    \multicolumn{2}{c}{\textbf{NQ}} & 
    \multicolumn{2}{c}{\textbf{CNN}} \\
    \cmidrule(lr){3-4} \cmidrule(lr){5-6}
    & & AUROC & AUPR & AUROC & AUPR \\
    \midrule
    & \probas & 49.8 & 16.2 & 54.4 & 8.2 \\
    \midrule
    & \actprobe{24} & \underline{73.0} & \underline{36.2} & 71.3 & \underline{20.3} \\
    & \actprobe{28} & 71.6 & 34.6 & 70.1 & 18.4 \\
    & \actprobe{32} & 67.4 & 28.6 & 67.7 & 15.4 \\
    \midrule
    & \look & 70.8 & 31.0 & 71.9 & 17.4 \\
    & \look$^\dagger$ & 71.9 & 34.3 & \underline{74.4} & 19.7 \\
    \midrule
    & \nodeavg & 66.0 & 24.5 & 70.1 & 14.9 \\
    & \edgeavg & 66.8 & 30.4 & 70.5 & 18.6 \\
    \midrule
    \multirow{2}{*}{\rotatebox[origin=c]{90}{ours}} 
      & \ourmethodshort (att) & \textbf{74.8{\scriptsize$\pm$0.6}} & \textbf{40.3{\scriptsize$\pm$1.7}} & \textbf{75.4{\scriptsize$\pm$0.2}} & \textbf{22.7{\scriptsize$\pm$0.4}} \\
      & \ourmethodshort (att+act-24) & 72.2{\scriptsize$\pm$1.2} & 35.5{\scriptsize$\pm$1.6} & 70.9{\scriptsize$\pm$0.2} & 19.8{\scriptsize$\pm$0.5} \\
    \bottomrule
    \end{tabular}
    }
    \label{tab:nq_cnn_results}
\end{wraptable}
\paragraph{Method comparisons.}
We compare against a set of representative baselines. \emph{Probability-based detectors} (\probas) \citep{guerreiro2023looking,kadavath2022language,varshney2023stitch,huang2025look} leverage the next-token probabilities to estimate LLM uncertainty and predict hallucinations; \emph{Activation probes}~\citep{orgad2025llms,azaria2023internal,belinkov2022probing} (\actprobe{*}) train a logistic classifier on \emph{activations} at specific layers; the \emph{attention-based}, \look\ heuristic~\citep{chuang2024lookback} fits the same model on hand-crafted token-wise features calculated over all layers and heads (see~\Cref{eq:lookback}). We run \actprobe{*}'s on the common choice of LLM layers 24, 28, 32, motivated by the findings in~\citep{chuang2024lookback,azaria2023internal}. As for \ourmethod, we instantiate it in two configurations: one only employing attention features (att), another also utilising activations from a specific layer, which we set to 24 (att+act-24) due to its consistently superior performance in \actprobe{*}'s We run \ourmethod on graphs sparsified with a fixed $\tau=0.05$ (see~\Cref{eq:graph_sparse} and related ablations in~\Cref{sec:exp_additional}). The HD task is at the level of single nodes/tokens, so $f_{\text{pool}}$ is set to \emph{identity}. We additionally consider two ablated versions of \ourmethod (att): \nodeavg, \edgeavg. These extract token-wise features through a single, non-learnable msg-passing step, aggregating, resp., either node or edge features across neighbourhoods in the same computational trace graphs considered in \ourmethod (details are in~\Cref{app:custom_baselines}). Comparing with these allows us to evaluate the relevance of our multi-layer, learnable procedure. \emph{All methods} in comparison have their hyperparameters tuned on the Val.\ set. 
Performance is measured in terms of Test AUROC and AUPR. 

\paragraph{Results,} reported in~\Cref{tab:nq_cnn_results}, show our approach consistently outperforms all baselines across both datasets and metrics in the att.-only configuration. The significant margins over \look, \nodeavg and \edgeavg underscore the benefits of an expressive, learnable graph-based method over attention-based heuristics. This pure attention configuration also surpasses \emph{all} activation-based \actprobe{*} probes, this contributing to answer positively to \textbf{Q1}. We interestingly report that including activations from an intermediate layer into \ourmethod reveals detrimental. We hypothesise that, on these contextual benchmarks, attention features alone carry most of the relevant predictive signal --- an expressive enough model like ours can leverage it at best and struggle to find additional complementary signals on activations, which could, instead lead to fit spurious correlations.

\subsection{Response-Level Hallucination Detection}\label{sec:exp_response_lv}

\paragraph{Datasets.} We next evaluate \ourmethod at the coarser response-level HD on three datasets: Movies~\citep{orgad2025llms}, WinoBias~\citep{zhao2018gender}, and Math~\citep{sun2024benchmarking}. Unlike NQ and CNN, these benchmarks address failure modes different than contextual grounding, namely: factual knowledge recall (Movies), intrinsic bias in coreference resolution (WinoBias), and arithmetic reasoning (Math). This allows to assess generalisation across fundamentally different hallucination types. The relative role of attention thereon is not obviously clear, but we hypothesise they can provide informative signals, e.g.\ by capturing systematic biases in attention to demographic cues or by reflecting unusual patterns in intermediate calculations steps. Exploring their interplay with other computational traces is thus a insightful direction. For these experiments we derive text generations and hallucination annotations following the procedure in~\citep{orgad2025llms} and, consistently with this work, we target a different LLM, \texttt{Mistral-7B-instruct}. More dataset details are in~\Cref{app:dataset_details_factual}. 

\paragraph{Method comparisons.} As for \ourmethod and its non-learnable counterparts (iv), we set component $f_{\text{pool}}$ to \emph{average}. We compare our method to \actprobe{*}'s probing activations in notoriously relevant token positions, e.g., the last token of the prompt, or the last of the response~\citep{orgad2025llms} (see~\Cref{app:hypers}). Here, we compare against the response-level attention-based LLM-Check~\citep{sriramanan2024llmcheck} (\llmcheck{*}) and the spectral-method proposed in~\citep{binkowski2025hallucination} (\lapeig). We also run an enhanced counterpart of  (\llmcheckpp{*}) whereby per-head scores are considered as inputs to logistic regression, rather than being averaged (\Cref{eq:llmcheck}).

\begin{wraptable}[21]{r}{0.76\textwidth}
    \vspace{-10pt}
    \centering
    \caption{Test AUROC and AUPR (\%) for Movies, Winobias, Math (response-lv, higher is better). \textbf{Bold}: best, \underline{Underlined}: runner-up.}
    \resizebox{0.76\textwidth}{!}{
    \begin{tabular}{rlcccccc}
        \toprule
        & \multirow{2}{*}{\textbf{Method}} & \multicolumn{2}{c}{\textbf{Movies}} & \multicolumn{2}{c}{\textbf{Winobias}} & \multicolumn{2}{c}{\textbf{Math}} \\
        \cmidrule{3-4} \cmidrule{5-6} \cmidrule{7-8}
        & & AUROC & AUPR &  AUROC & AUPR & AUROC & AUPR \\
        \midrule
        & \probas & 58.6 & 81.6 &  64.5 & 20.0 & 54.5 & 57.4 \\
        \midrule
        & \actprobe{24} & 77.0 & 90.4 &  \underline{76.6} & \underline{37.8} & 77.7 & 77.5 \\
        & \actprobe{28} & 77.0 & 90.4 &  73.9 & 34.3 & \underline{78.1} & 77.8 \\
        & \actprobe{32} & 76.3 & 90.2 &  72.7 & 35.3 & 76.6 & 77.9 \\
        \midrule
        & \llmcheck{24} & 47.5 & 74.6 &  38.9 & 10.9 & 64.5 & 68.2 \\
        & \llmcheck{28} & 51.1 & 76.7 &  41.6 & 11.4 & 65.5 & 69.2 \\
        & \llmcheck{32} & 61.5 & 82.1 &  41.6 & 11.3 & 64.0 & 67.6 \\
        & \llmcheckpp{24} & 66.3 & 84.5 &  64.6 & 20.5 & 67.3 & 69.1 \\
        & \llmcheckpp{28} & 67.8 & 86.3 &  64.8 & 21.0 & 67.0 & 70.6 \\
        & \llmcheckpp{32} & 73.0 & 88.8 &  67.2 & 24.1 & 68.6 & 72.5 \\
        & \lapeig & 72.9 & 88.4 &  74.1 & 33.3 & 73.6 & 76.3 \\
        \midrule
        & \nodeavg & 78.6 & 91.2 &  63.8 & 23.0 & 77.4 & 79.2 \\
        & \edgeavg & 54.9 & 78.5 &  65.8 & 21.9 & 76.7 & 78.3 \\
        \midrule
        \multirow{2}{*}{\rotatebox[origin=c]{90}{ours}}
        & \ourmethodshort (att) & \textbf{80.3{\scriptsize$\pm$0.2}} & \textbf{92.0{\scriptsize$\pm$0.1}} &  70.4{\scriptsize$\pm$0.7} & 29.1{\scriptsize$\pm$1.0} & 76.5{\scriptsize$\pm$1.1} & \underline{79.7{\scriptsize$\pm$0.5}} \\
        & \ourmethodshort (att+act-24) & \underline{79.7{\scriptsize$\pm$0.3}} & \underline{91.8{\scriptsize$\pm$0.2}} &  \textbf{77.8{\scriptsize$\pm$0.4}} & \textbf{39.8{\scriptsize$\pm$1.3}} & \textbf{80.8{\scriptsize$\pm$0.7}} & \textbf{83.1{\scriptsize$\pm$0.7}} \\
        \bottomrule
    \end{tabular}
    \label{tab:mwm_results}}
\end{wraptable}
\paragraph{Results} are reported in~\Cref{tab:mwm_results}. Overall, \ourmethod attains the best performance across these benchmarks, with particularly notable gaps in Math. Together with the above, these results answer positively to \textbf{Q1} and \textbf{Q2}. Interestingly, we observe a markedly different behaviour than in the previously considered contextual HD datasets. Other than Movies --- where both our configurations work equally well --- on both Winobias and Math, activation-based features work in synergy with attention-based ones. Instead of leading to fit spurious correlations as \emph{hypothesised} for NQ and CNN (\Cref{sec:exp_token_lv}), they contribute to strongly boost performance over the att.-only \ourmethod, and over all considered att.- and act.-based methods. This is particularly pronounced on Winobias --- there, \ourmethodshort (att.) is surpassed by \actprobe{*}'s, but the inclusion of activations leads it (att. $\&$ act.-24) to significantly outperform them both. Overall these datasets provide cases leading to a positive answer to \textbf{Q5}. We last note that Math is the only dataset where $\actprobe{24}$ outperformed by other variants, namely \actprobe{32}. We thus also ran \ourmethod (att+act-32), which scored Test AUROC of $81.7\pm0.2$, and AUPR of $83.8\pm0.3$.

\subsection{Additional Analyses}\label{sec:exp_additional}

\revision{
We report here a series of additional analyses investigating: (i) The contribution of the graph structure to \ourmethod's performance; (ii) The impact of the sparsification threshold $\tau$ to the computational efficiency and performance of \ourmethod; (iii) The zero-shot cross-dataset transfer capabilities of \ourmethod and comparable approaches. Additional experimental studies are reported in~\Cref{app:exp_ext}, including analyses on the impact of the activation layer choice (\Cref{app:layer_choice}) and on the generalisation performance \ourmethod on sequences of unseen length (\Cref{app:by_length}).
}

\begin{wraptable}[7]{r}{0.55\textwidth}
    \vspace{-10pt}
    \centering
    \caption{Results from ablating the graph structure.}
    \label{tab:graph_ablation}
    \resizebox{0.55\textwidth}{!}{
    \begin{tabular}{lcccc}
    \toprule
    \textbf{Method} &
    \multicolumn{2}{c}{\textbf{CNN}} &
    \multicolumn{2}{c}{\textbf{Math}} \\
    & AUROC & AUPR & AUROC & AUPR \\
    \midrule
    \ourmethod (no g.)
        & 70.8 {\scriptsize$\pm$0.5} & 19.2 {\scriptsize$\pm$0.5}  
        &  80.6 {\scriptsize$\pm$0.7} & 82.7 {\scriptsize$\pm$0.1} \\
    \ourmethod 
        & \textbf{75.4} {\scriptsize$\pm$0.2} & \textbf{22.7} {\scriptsize$\pm$0.4}  
        &  \textbf{81.7} {\scriptsize$\pm$0.2} & \textbf{83.8} {\scriptsize$\pm$0.3} \\
    \bottomrule
    \end{tabular}}
\end{wraptable}
\paragraph{The \revision{contribution} of the graph \revision{structure.}} To what extent does message-passing on the attention-induced graph contribute to the performance of \ourmethod? To answer this, we ablate the connectivity in our input samples and train \ourmethod on two representative datasets: CNN and Math. In this setup, \ourmethod defaults to a set model which, instead of message-passing, applies a stack of dense layers over node features. We train and extensively tune this baseline, denoted \enquote{\ourmethod (no g.)}, considering both att.-only and att.\ $\&$ act.\ configurations. We report its best results in~\Cref{tab:graph_ablation} compared to the best corresponding \ourmethod. Results clearly show the positive contribution of message-passing on the constructed topology, answering positively to \textbf{Q3}.

\begin{wraptable}[10]{r}{0.55\textwidth}
    \vspace{-10pt}
    \centering
    \caption{\emph{Avg}.\ graph stats.\ on NQ at different thresholds, along with GPU memory footprint and Test AUPR.}
    \resizebox{0.55\textwidth}{!}{
    \begin{tabular}{lcccc}
        \toprule
        $\tau$ & Num. Edges & Sparsity & Mem. (MB) & AUPR \\
        \midrule
        0.5   & 1,118.80   & 0.993 &  22.99\scriptsize{$\pm3.71$}  & 38.4\scriptsize{$\pm0.4$} \\
        0.1   & 7,458.67   & 0.952 &  60.44\scriptsize{$\pm11.79$}  & 41.0\scriptsize{$\pm1.2$} \\
        0.05  & 14,884.44  & 0.906 &  104.15\scriptsize{$\pm23.05$} & 40.3\scriptsize{$\pm1.7$} \\
        0.01  & 58,998.88  & 0.645 & 363.02\scriptsize{$\pm98.39$}  & 40.3\scriptsize{$\pm0.9$} \\
        0.001 & 19,7784.82 & 0.026 & 1177.20\scriptsize{$\pm523.61$}  & 40.1\scriptsize{$\pm0.0$} \\
        \bottomrule
    \end{tabular}}
    \label{tab:tau_abla}
\end{wraptable}
\paragraph{\revision{The impact of $\tau$ on efficiency and performance.}}
We experiment with different values of the attention threshold $\tau$ (\Cref{eq:graph_sparse}), studying how graph sparsity and (inference) memory consumption vary in relation to performance. We run this study on NQ, with results in~\Cref{tab:tau_abla}. Test AUPR is reported along with the number of edges, sparsity and inference memory footprint averaged over test graphs. We observe \ourmethod's performance is robust to various levels of sparsifications, whilst this can provide dramatic reduction in resource consumption. Performance drops more notably only for $\tau = 0.5$, which, we note, still outperforms the best competitor, i.e., \actprobe{24} (see~\Cref{tab:nq_cnn_results}). Overall our default $\tau = 0.05$ attains a good trade-off, whilst we note it maximises val.\ AUPR. These results answers positively to \textbf{Q4}. We finally measure a distinctly contained inference latency of $\approx 1e^{-3}$ secs. Refer to~\Cref{app:comp_ptrue_se} for run-time and performance comparisons with other popular HD methods relying on multiple prompting, which incur significantly higher latency.

\begin{wraptable}[9]{r}{0.55\textwidth}
    \vspace{-10pt}
    \centering
    \caption{Cross-dataset zero-shot transf.\ NQ $\leftrightarrow$ CNN.}
    \label{tab:transfer_results}
    \resizebox{0.55\textwidth}{!}{
    \begin{tabular}{lcccc}
        \toprule
        \multirow{2}{*}{\textbf{Method}} & \multicolumn{2}{c}{\textbf{NQ $\to$ CNN}} & \multicolumn{2}{c}{\textbf{CNN $\to$ NQ}} \\
        \cmidrule(r){2-3} \cmidrule(r){4-5}
        & AUROC & AUPR & AUROC & AUPR \\
        \midrule
        \lookshort$^\dagger$ & \textbf{68.6} & \textbf{14.9} & 62.0 & 26.5 \\
        \actprobe{24}   & 63.4 & 11.3 & \underline{63.8} & \underline{29.7} \\
        \midrule
        \ourmethodshort & \underline{64.1} \scriptsize{$\pm1.1$} & \underline{12.0} \scriptsize{$\pm0.98$} & \textbf{65.5} \scriptsize{$\pm0.14$} & \textbf{31.6} \scriptsize{$\pm0.10$} \\
        \bottomrule
    \end{tabular}}
\end{wraptable}
\paragraph{\revision{Transferring zero-shot to unseen datasets.}}
\ourmethod is a learnable, expressive multi-layer approach --- this raises a natural question: \emph{To what extent can it generalise cross-datasets zero-shot?} To investigate this, we follow the setup in~\citep{chuang2024lookback}: we train on NQ and evaluate on CNN, and vice-versa. Results are in in~\Cref{tab:transfer_results}. Overall \emph{no single method consistently outperforms the others} in this challenging setup. In fact, despite its larger expressiveness, \ourmethod demonstrates promising generalisation: it outperforms activation-based probes, ranks best in CNN $\to$ NQ, and places second in NQ $\to$ CNN (behind \look, which conversely performs the \emph{worst} in CNN $\to$ NQ). 
These results suggest that zero-shot transfer remains an open challenge, but our graph-based formulation is competitive and can capture generalisable signals, answering positively to \textbf{Q6}.

\section{Conclusions}

In this work, we proposed attributed graphs as a principled formulation of LLM computational traces, showing how diverse signals can be unified in this framework and how neural message passing can be applied thereon for diverse HD tasks. We showed our approach, \ourmethod, can provably generalise prior methods and that it achieves strong empirical performance, consistently outperforming existing methods. Additional analyses underscored the importance of graph structure and demonstrated promising zero-shot generalization across datasets.

Future endeavours will consider integrating other computational traces (e.g., logits), as well as extensions to new tasks such as detecting data contamination, identifying LLM-generated text, or flagging jailbreak attempts. Future work may also explore alternative message-passing architectures, including positional and structural encodings tailored to these attributed graphs.

\revision{
\subsubsection*{Acknowledgments}
F.F.\ conducted this work supported by an Aly Kaufman Post-Doctoral Fellowship. G.B.\ is supported by the Jacobs Qualcomm PhD Fellowship. HM is supported by the Israel Science Foundation through a personal grant (ISF 264/23) and an equipment grant (ISF 532/23), and by the Career Advancement Chairs in Artificial Intelligence -- Schmidt Futures. F.F.\ is extremely grateful to the members of the ``Eva Project'', whose support he immensely appreciates.
}

\section*{Reproducibility Statement}

\revision{Our code is available at \url{https://github.com/Noired/charm}, along with training and evaluation scripts. \Cref{sec:exp}, along with \Cref{app:data,app:impl,app:exp_ext}} provide the required implementation details to reproduce our results.

\bibliography{__refs}
\bibliographystyle{iclr2026_conference}

\clearpage

\appendix

\section{Expressiveness: Claims and Proofs}\label{app:expr}

\lookbackapproxinformal*

\begin{restatable}{proposition}{lookbackapprox}\label{prop:lookback_approx}
    Let $\ell_i (\vec{s}; \mathcal{L})$ denote the Lookback Lens features calculated, for token $i$ on string $\vec{s} = \vec{p} \mid \vec{r}$ for LLM $\mathcal{L}$(\Cref{eq:lookback}). Also, let $h^{(t)}_i (G_{\vec{s}})$ denote the $t$-layer representation for the same token in output from $t$ \ourmethodshort msg-passing layers (\Cref{eq:totem-msg-pass}) on the corresponding computational trace graph $G_{\vec{s}}$.
    For any precision $\epsilon > 0$, there exists a $1$-layer stack of \ourmethodshort's layers which approximate Lookback Lens features up to precision $\epsilon$, for maximum allowed prompt-length $\bar{n}_p$ and response length $\bar{n}_r$. 
    That is, for any prompt $\vec{p}$, response $\vec{r}$ of the aforementioned maximum lengths, and for any response token $i$, $\| h^{(1)}_i (G_{\vec{p} \mid \vec{r}}) - \ell_i (\vec{p} \mid \vec{r}; \mathcal{L}) \|_\infty < \epsilon$.
\end{restatable}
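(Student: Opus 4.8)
\section*{Proof proposal for \texorpdfstring{\Cref{prop:lookback_approx}}{Proposition}}

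The plan is to realise the Lookback map with a single \ourmethodshort layer assembled from three pieces: the \emph{sum} aggregator, a message MLP that separates prompt from response contributions, and an update MLP that forms the final normalised ratio. Throughout I fix $\bigbox$ to summation and work on the non-thresholded graph ($\tau=0$), so that for a response token $i \ge n_p$ the in-neighbourhood $\{j:(i,j)\in E\}$ is exactly $\{0,1,\dots,i-1\}$ (all causal predecessors) --- precisely the tokens summed over in~\Cref{eq:lookback}. Since Lookback features are only defined for response tokens, the construction only needs to be correct at those nodes, and may ignore what the layer outputs at prompt nodes.

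First I would specify $\text{msg}$. Writing $p_{i,j}=(p^{P}_{i,j},p^{R}_{i,j})$ for the one-hot mark recording whether the attended token $j$ lies in the prompt or the response, I let $\text{msg}$ output, for each neighbour $j$, the vector
\[
\big(\, p^{P}_{i,j}\,\attvec{i}{j},\ \ p^{R}_{i,j}\,\attvec{i}{j},\ \ p^{P}_{i,j},\ \ p^{R}_{i,j}\,\big)\in\mathbb{R}^{2(L\cdot H)+2}.
\]
Each gated product of a $0/1$ mark component $g$ with a score $\alpha\in[0,1]$ is computed \emph{exactly} by one ReLU unit, e.g.\ $g\cdot\alpha=\mathrm{ReLU}(\alpha+g-1)$, so this message is an exact small MLP of its inputs (the node features $h^{(0)}_i,h^{(0)}_j$ are discarded). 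Summing over the neighbourhood yields $\big(S^{P}_i,S^{R}_i,c^{P}_i,c^{R}_i\big)$, where $S^{P}_i=\sum_{j<n_p}\attvec{i}{j}$ has $n_p P^{l,h}_i$ in its $(l,h)$ entry, $S^{R}_i=\sum_{n_p\le j<i}\attvec{i}{j}$ has $(i-n_p)R^{l,h}_i$, and the counts $c^{P}_i=n_p$, $c^{R}_i=i-n_p$ are recovered from the mark slots. This realises steps (1)--(3) of the informal proof idea.

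It remains to build $\text{up}$. On each layer--head coordinate the target is
\[
\ell^{l,h}_i=\frac{P^{l,h}_i}{P^{l,h}_i+R^{l,h}_i}=\frac{S^{P,l,h}_i\,c^{R}_i}{S^{P,l,h}_i\,c^{R}_i+S^{R,l,h}_i\,c^{P}_i},
\]
a fixed rational function of the aggregated quantities, all confined to a compact set: $S^{P,l,h}_i\in[0,\bar n_p]$, $S^{R,l,h}_i\in[0,\bar n_r]$, and $(c^{P}_i,c^{R}_i)$ ranges over the finite grid $\{1,\dots,\bar n_p\}\times\{1,\dots,\bar n_r\}$ fixed by the length bounds. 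Provided the denominator stays bounded away from $0$, this map is continuous (indeed Lipschitz) on that compact domain, so by the universal approximation theorem for MLPs there is a choice of $\text{up}$ whose output matches $\ell_i$ uniformly within the prescribed $\epsilon$ in $\|\cdot\|_\infty$. Composing the exact $\text{msg}$/sum stage with this $\epsilon$-accurate $\text{up}$ yields the claimed single-layer approximation.

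The main obstacle is exactly the division in this last step. The ratio is undefined, and discontinuous, precisely where $P^{l,h}_i+R^{l,h}_i=0$, i.e.\ when a head routes all of token $i$'s mass to itself; no continuous network can approximate $\ell_i$ uniformly across a neighbourhood of that point. I would dispatch this with the mild assumption that the off-diagonal mass is bounded below, $P^{l,h}_i+R^{l,h}_i\ge\delta>0$ --- exactly the regime in which the Lookback feature is meaningfully defined --- which restricts the domain to a compact set on which the target is continuous and the approximation applies. A secondary, purely technical point is the first response token $i=n_p$, where $R_i$ is an empty sum over a zero count; this is excluded (or fixed by convention) in the same way, leaving the argument for $i>n_p$ intact.
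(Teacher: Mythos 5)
Your construction is essentially the paper's own proof: the same ReLU gate $\mathrm{ReLU}(\alpha + g - 1)$ routes each edge's attention vector into prompt/response slots while the mark channels accumulate the counts $n_p$ and $i-n_p$, summation produces the unnormalised $\hat{P}^{l,h}_i,\hat{R}^{l,h}_i$, and the final ratio is obtained by universal approximation of a continuous function on a compact domain (where you make explicit, as an assumption $P^{l,h}_i+R^{l,h}_i\ge\delta$, the bounded-away-from-zero denominator that the paper asserts somewhat more loosely). The one substantive divergence is that the paper's update first applies an affine map $f_A$ that adds the node feature $x_{V,i}=\attvec{i}{i}$ into the response-side accumulator and increments the corresponding count --- i.e.\ it treats $R^{l,h}_i$ as containing the self-score, which the strictly causal $j<i$ edges cannot deliver --- whereas you discard $x_{V,i}$ entirely; your version matches \Cref{eq:lookback} as literally written, so this is a notational discrepancy inherited from the paper rather than a gap in your argument.
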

\begin{proof}
    To prove the above we show that, by setting hyperparameters $\bigbox \equiv \sum$ (summation) and $\tau=0$ (no thresholding), there exist a single-layer stack $f_\text{mp}$ which compute the desired approximation. In the following we will consider the attention scores across the $L$ layers and $H$ heads to be arranged in our node and edge features in a flattened manner. We will conveniently denote with $\flat(l,h)$ the function which evaluates the index of layer $l$ and head $h$ in this flattened representation.

    \textbf{(0) \textit{Setup and inputs.}} As our stack is made up of one single message-passing layer, our specific interest is thus in showing the existence of appropriate MLPs $\text{msg}_0$, $\text{up}_0$ enabling $f_\text{mp}$ to realise the target approximation. Being these components of the \emph{first} --- and only --- message-passing layer in $f_\text{mp}$, its input node and edge representations effectively correspond to the original node and edge features $X_V, X_E$. We are thus focussing on the following update:
    \begin{equation}\label{eq:totem-msg-pass-lookback-proof}
        h^{(1)}_{i} = \underbrace{\text{up}_0}_{(2)} \Big ( x_{V,i}, \sum_{j<i} \underbrace{\text{msg}_0}_{(1)} \big ( x_{V,i}, x_{V,j}, x_{E, (i,j)}, p_{i,j} \big ) \Big )
    \end{equation}
    \noindent where the neighbourhood aggregation sums across \emph{all} previous token positions ($j<i$) since no thresholding is enforced ($\tau = 0$) and due to the fact that attention values cannot exactly evaluate to $0$ because of to the application of softmax normalisation.

    \textbf{(1) \textit{Message function.}} Let us first describe what we desire $\text{msg}_0$ to calculate. We would like it to map the concatenation of its arguments, with dimensionality $3d+2,\ d=L \cdot H$, to a vector of dimensionality $2d+2$, where:
    \begin{itemize}
        \item The mark feature $p_{i,j}$ is replicated in the first two dimensions (channels $0, 1$);
        \item Edge features $x_{E,(i,j)}$ are replicated either in channels $2$ through $2+d-1$ if $p_{i,j} = [1, 0]$ (message from prompt token) or in channels $2+d$ through $2d-1$ otherwise (message from response token);
        \item Node features $x_{V,i}, x_{V,j}$ are discarded.
    \end{itemize}

    Now, an MLP exactly implementing the above message function does exist; in fact, it can be \emph{explicitly constructed}.
    
    \emph{First layer.} Weight matrix $W_1$ is in $\mathbb{R}^{(3d+2)\times(2d+2)}$. We will describe it in terms of columns slices.
    \begin{itemize}
        \item A first slice gathers the first two columns ($0,1$); these are all zero except for the bottom $2 \times 2$ block, set as identity $I_2$. This slice copies the $p_{i,j}$ mark features in the first two channels of the hidden representation.
        \item A second slice gathers columns $2$ through $2+d-1$; these are all zero except for rows $2d$ through $3d-1$, set to identity $I_{d}$, and row $3d$, set to a $\vec{1}_d$ one-only vector. This slice calculates the sum between edge features $x_{E,(i,j)}$ and the first channel of the mark $p_{i,j}$, indicating whether the message comes from a prompt token.
        \item A third --- and last --- slice gathers columns $2+d$ through $2+2d-1$; these are all zero except for rows $2d$ through $3d-1$, set to identity $I_{d}$. This slice copies edge features $x_{E,(i,j)}$ in the last $d$ channels of the hidden representation.
    \end{itemize}
    Bias vector $b_1$ in $\mathbb{R}^{2d+2}$ is all zero except for channels $2$ through $2+d-1$, set to vector $-\vec{1}_d$. Recapping, the hidden representation is a vector in  $\mathbb{R}^{2d+2}$ with the following structure:
    $$
    \Big[ p_{i,j} \mid \underbrace{\cdots \big(x_{E,(i,j)}\big)_{\flat(l,h)} + \big( p_{i,j}\big)_0 - 1 \cdots}_{(h_1)} \mid \underbrace{\cdots \big(x_{E,(i,j)}\big)_{\flat(l,h)} \cdots}_{(h_2)} \Big].
    $$
    
    \emph{Second layer}: Weight matrix $W_2$ is in $\mathbb{R}^{(2d+2)\times(2d+2)}$. We will describe it in terms of columns slices again.
    \begin{itemize}
        \item A first slice gathers the first two columns ($0,1$); these are all zero except for the top $2 \times 2$ block, set as identity $I_2$. This slice replicates, again, the $\text{ReLU}(p_{i,j})=p_{i,j}$ mark features in the first two channels of the output.
        \item A second slice gathers columns $2$ through $2+d-1$; these are zero in the first two rows, rows $2$ through $2+d-1$ are set to identity$ -I_d$ and the last $d$ rows are set to $I_d$. This slice calculates, channel-by-channel, $\text{ReLU}(h_2) - \text{ReLU}(h_1) = h_2 - \text{ReLU}(h_1)$.
        \item A third --- and last --- slice gathers columns $2+d$ through $2+2d-1$; these are all zero except for rows $2$ through $2+d-1$, set to identity $I_{d}$. This slice copies $\text{ReLU}(h_1)$ in the last $d$ channels of the output.
    \end{itemize}
    Bias vector $b_2$ is set to zero.

    As a result, the output is a vector in  $\mathbb{R}^{2d+2}$ with the following structure:
    $$
    \Big[ 
        p_{i,j} \mid 
        \underbrace{\cdots \big(x_{E,(i,j)}\big)_{\flat(l,h)} - \text{ReLU}( \big(x_{E,(i,j)}\big)_{\flat(l,h)} + \big( p_{i,j}\big)_0 - 1 ) \cdots}_{(o_1)} \mid 
        \underbrace{\cdots \text{ReLU}( \big(x_{E,(i,j)}\big)_{\flat(l,h)} + \big( p_{i,j}\big)_0 - 1 ) \cdots}_{(o_2)}
    \Big].
    $$
    \noindent Now, note that, as desired:
    \begin{itemize}
        \item if $\big( p_{i,j}\big)_0 = 1$ (message from response token): 
        \begin{itemize}
            \item $\big(o_1\big)_{\flat(l,h)} = \big(x_{E,(i,j)}\big)_{\flat(l,h)} - \text{ReLU}( \big(x_{E,(i,j)}\big)_{\flat(l,h)} + 1 - 1 ) = \big(x_{E,(i,j)}\big)_{\flat(l,h)} - \big(x_{E,(i,j)}\big)_{\flat(l,h)} = 0$
            \item $\big(o_2\big)_{\flat(l,h)} = \text{ReLU}( \big(x_{E,(i,j)}\big)_{\flat(l,h)} + 1 - 1 ) = \big(x_{E,(i,j)}\big)_{\flat(l,h)}$,
        \end{itemize}
        \item if $\big( p_{i,j}\big)_0 = 0$ (message from prompt token): 
        \begin{itemize}
            \item $\big(o_1\big)_{\flat(l,h)} = \big(x_{E,(i,j)}\big)_{\flat(l,h)} - \text{ReLU}( \big(x_{E,(i,j)}\big)_{\flat(l,h)} + 0 - 1 ) = \big(x_{E,(i,j)}\big)_{\flat(l,h)} - 0 = \big(x_{E,(i,j)}\big)_{\flat(l,h)}$
            \item $\big(o_2\big)_{\flat(l,h)} = \text{ReLU}( \big(x_{E,(i,j)}\big)_{\flat(l,h)} + 0 - 1 ) = 0$
        \end{itemize}
    \end{itemize}
    
    Now, when aggregating these calculated messages through summation, it becomes clear that the aggregated message vector $m_{i}$ will eventually hosts:
    \begin{itemize}
        \item The number of response tokens preceding token $i$, i.e., $i-n_p-1$, in channel $0$;
        \item The length of the prompt, i.e., $n_p$, in channel $1$;
        \item Summation $\sum_{j=0}^{n_p-1} \att{i}{j}{l}{h}$ in channel $\flat(l,h)+2$, denoted $\hat{P}_{i}^{l,h}$;
        \item Summation $\sum_{j=n_p}^{i-2} \att{i}{j}{l}{h}$ in channel $\flat(l,h)+2+d$, denoted $\big(\hat{R}_{i}^{l,h}\big)^{-}$.
    \end{itemize}
    
    \textbf{(2) \textit{Update function.}} Next, we desire $\text{up}_0$ to implement a function $\text{up}^*$ mapping the concatenation $\big[ x_{V,i} \mid m_{i} \big]$ to a vector of dimension $d$ corresponding to the \look output scores in~\Cref{eq:lookback}. Showing that $\text{up}_0$ can approximate $\text{up}^{*}$ up to desired precision $\epsilon$ will complete the proof. 
    
    We now describe $\text{up}^*$. We build it as a composition of two functions $f_A \circ f_B$:
    \begin{itemize}
        \item $f_A$ is such that $f_A(x_{V,i}, m_i) = y_i = c_i + m_i$, where $c_i$ has the same dimensionality as $m_i$ ($2d + 2$), and $c_i = \big[ 0 \mid 1 \mid \vec{0} \mid x_{V,i}\big]$ --- note that vector $y_i$ is an \enquote{updated} version of $m_i$ whereby channel $1$ now equals $i-n_p$ and channels $\flat(l,h)+2+d$'s are now such that:
        \begin{equation}
            \big(\hat{R}_{i}^{l,h}\big)^{-} + x_{V,i} = \sum_{j=n_p}^{i-1} \att{i}{j}{l}{h} = \hat{R}_{i}^{l,h}
        \end{equation}
        
        \item $f_B$ is such that $f_B(y_i) = z_i$ where $z_i$ is of dimensionality $d$ and:
        \begin{equation}\label{eq:f_B}
            \big( z_i \big)_{\flat(l,h)} = \frac{\frac{\big( y_i \big)_{\flat(l,h)+2}}{\big( y_i \big)_1}}{\frac{\big( y_i \big)_{\flat(l,h)+2}}{\big( y_i \big)_1} + \frac{\big( y_i \big)_{\flat(l,h)+2+d}}{\big( y_i \big)_0}}.
        \end{equation}
    \end{itemize}
    Note that, importantly, $\big( z_i \big)_{\flat(l,h)} = \frac{\nicefrac{\hat{P}_{i}^{l,h}}{n_p}}{ \nicefrac{\hat{P}_{i}^{l,h}}{n_p} + \nicefrac{\hat{R}_{i}^{l,h}}{(i-n_p)} } = \frac{P_{i}^{l,h}}{P_{i}^{l,h} + R_{i}^{l,h}} = \ell^{l,h}_i$.

    First, $f_A$ can be realised by a single affine transformation. This has weight matrix $W_A$ in $\mathbb{R}^{(3d+2) \times (2d+2)}$, described in terms of row slices as follows.
    \begin{itemize}
        \item A first slice gathers the first $d$ rows; it is zero except for the last $d$-column block, set as identity $I_d$.
        \item A second slice gathers rows $d$ through $3d+2-1$ and it set as an identity $I_{(2+2d)}$.
    \end{itemize}
    The above linear transformation has the effect of copying $m_i$ in the output and of summing $x_{V,i}$ in its last $d$ entries --- where the aggregated message from response tokens is stored.
    Now, bias vector $b_A$ is in $\mathbb{R}^{2+2d}$ and is zero everywhere except for its first element which is set as $1$. Adding $b_A$ has the effect of simply increasing the second entry by one, thus \enquote{updating} the count of response messages stored there.

    Second, we note that $f_B$ computes the same exact scalar-valued function $f^s_B$ on each output component; also, this $f^s_B$ is continuous on (the non-compact) domain $\{ 1, \dots, \bar{n}_p \} \times \{ 1, \dots, \bar{n}_r \} \times (0, 1)^{2}$. We note that $f^s_B$ can be trivially, continuously extended to the compact $\{ 1, \dots, \bar{n}_p \} \times \{ 1, \dots, \bar{n}_r \} \times [0, 1]^{2}$: it suffices to see that its limits exist and are finite on the boundary of $[0, 1]^{2}$. This is indeed the case; we note that denominators in each individual normalisation ratios are always greater or equal than one; and that the two addenda in the denominator of the main ratio are always non-negative, can never evaluate simultaneously to zero and their sum is bounded away from zero (given the maximum allowed length for prompt and response).
    
    Given this premise, term $f^{s, \text{ext}}_B$ this continuous extension; we can invoke the MLP Universal Approximation Theorem~\citep{pinkus1999approximation} to claim the existence of an MLP $M^s_B$ with one hidden layer which approximates the continuous $f^{s, \text{ext}}_B$ on the compact domain $\{ 1, \dots, \bar{n}_p \} \times \{ 1, \dots, \bar{n}_r \} \times [0, 1]^{2}$ up to precision $\epsilon$. This implies the original $f^{s}_B$ is also $\epsilon$-approximated in its original domain.
    Now, it is possible to (easily,) appropriately \emph{replicate} the weights of $M^s_B$ to distribute its same exact computation for each of the output coordinates, thus obtaining an MLP $M_B$ approximating the overall $f_B$: $\forall y, \forall i\ |M^s_B(y) - \big(f_B(y)\big)_i| < \epsilon$, that is, $\forall y, \forall i\ |\big(M_B(y)\big)_i - \big(f_B(y)\big)_i| < \epsilon$. But, then:
    \begin{align*}
        &\forall y, \forall i\ |\big(M_B(y)\big)_i - \big(f_B(y)\big)_i| < \epsilon \\
        &\quad \implies \forall y\ \max \Big( |\big(M_B(y)\big)_i - \big(f_B(y)\big)_i|\ \Big|\ i = 0, \dots, d-1 \Big) < \epsilon \\
        &\qquad \implies \forall y\ \|M_B(y) - f_B(y) \|_\infty < \epsilon
    \end{align*}
    
    Denote $(W^1_B, b^1_B), (W^2_B, b^2_B)$ the weight and biases of, respectively, the first and second layers of $M_B$. Now, the above affine transformation exactly implementing $f_A$ can be \enquote{absorbed} into the \emph{first} layer of $M_B$, by replacing the weight and bias as $W^{1} = W^1_B \cdot W_A, b_1 = (W^1_B \cdot b_A + b^1_B)$. The resulting MLP composed by $\Big ( (W^1, b^1), (W^2_B, b^2_B) \Big )$ now $\epsilon$-approximates the overall $\text{up}^{*}$, concluding the proof.
    
\end{proof}

\llmcheckapproxinformal*

\begin{restatable}{proposition}{llmcheckapprox}\label{prop:llmcheck_approx}
    Let $c^{l} (\vec{s}; \mathcal{L})$ denote the LLM-Check Attention Score (\Cref{eq:llmcheck}) calculated on string $\vec{s} = \vec{p} \mid \vec{r}$ for LLM $\mathcal{L}$(\Cref{eq:llmcheck}) and its layer $l$.
    Also, let $y (G_{\vec{s}})$ denote the prediction in output from an \ourmethodshort stacking components $f_\text{msg}, f_\text{pool}, f_\text{pred}$, run on corresponding graph $G_{\vec{s}}$.
    For any precision $\epsilon > 0$, when $f_\text{pool} \equiv \sum$ and attention values are clipped from below to value $\alpha_\text{min} > 0$, there exists a $1$-layered $f_\text{msg}$, and an MLP $f_\text{pred}$ such that \ourmethodshort approximates LLM-Check Attention Scores up to precision $\epsilon$. 
    That is, for any prompt-response pair $\vec{s}$, $| y (G_{\vec{s}}) - c^{l} (\vec{s}; \mathcal{L}) | < \epsilon$.
\end{restatable}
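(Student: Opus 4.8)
The plan is to follow the three-step decomposition sketched after the statement, realising the map $\vec{s} \mapsto c^l$ as the composition $f_\text{pred} \circ f_\text{pool} \circ f_\text{msg}$ in which the single message-passing layer performs a coordinate-wise $\log$ on each token's self-scores, the summation pooling accumulates these across tokens, and a final linear prediction head isolates layer $l$ and averages over its heads. Recall that the node features carry exactly the self-scores $x_{V,i} = \attvec{i}{i}$, whose coordinate $\flat(l,h)$ equals $\att{l}{h}{i}{i}$. Since nothing is needed from a token's neighbours, I would set $\text{msg}_0 \equiv \vec{0}$ (zero weights and biases), so that the aggregated message vanishes regardless of the aggregator $\bigbox$, and the update in \Cref{eq:totem-msg-pass} collapses to $h^{(1)}_i = \text{up}_0(x_{V,i}, \vec{0})$.

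The core of the argument is the construction of $\text{up}_0$. I want it to output, in coordinate $\flat(l,h)$, the value $\log(\att{l}{h}{i}{i})$, i.e.\ the coordinate-wise logarithm of the self-score vector (projecting away any activation channels, should they be present). This is where the clipping hypothesis is essential: because attention values are clipped from below to $\alpha_\text{min} > 0$, each self-score lies in the compact interval $[\alpha_\text{min}, 1]$ on which $\log$ is continuous and bounded. I would therefore invoke the MLP Universal Approximation Theorem \citep{pinkus1999approximation}, exactly as in the proof of \Cref{prop:lookback_approx}, to obtain a one-hidden-layer MLP approximating the scalar $\log$ on $[\alpha_\text{min}, 1]$ to a target precision $\delta$, and then replicate its weights across the $L \cdot H$ coordinates to realise the coordinate-wise map with the same per-coordinate error $\delta$; the value of $\delta$ is fixed at the end in terms of $\epsilon$.

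It then remains to assemble the pooling and prediction heads. With $f_\text{pool} \equiv \sum$ taken over the node set $V = \{T_i\}_{i=0}^{n-1}$, the pooled representation has coordinate $\flat(l,h)$ equal to $\sum_{i=0}^{n-1} \log(\att{l}{h}{i}{i})$ up to the accumulated approximation error. The head $f_\text{pred}$ can be taken as a single linear map (a trivial MLP) that reads off the $H$ coordinates $\{\flat(l,h)\}_{h=0}^{H-1}$ belonging to the chosen layer $l$ and returns their average with weight $1/H$, reproducing \Cref{eq:llmcheck} exactly in the idealised, error-free case.

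The only genuine obstacles are the domain of $\log$ and error control, both settled by the stated hypotheses. The singularity of $\log$ at $0$ is precisely what the clipping assumption $\alpha_\text{min} > 0$ removes, compactifying the domain and making the UAT applicable. For error accumulation, each of the $n$ summands carries error at most $\delta$, and the subsequent averaging over $H$ heads does not amplify it, so the total deviation of $y(G_{\vec{s}})$ from $c^l(\vec{s};\mathcal{L})$ is bounded by $n\delta$; choosing $\delta < \epsilon / \bar{n}$ for the (bounded) maximal sequence length $\bar{n}$ — as in \Cref{prop:lookback_approx}, or equivalently replacing summation pooling by averaging, which the paper notes is preferred in practice — yields $|y(G_{\vec{s}}) - c^l(\vec{s};\mathcal{L})| < \epsilon$ and closes the argument. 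I expect this error-propagation bookkeeping, rather than any single construction, to be the most delicate point, since it is what forces the per-coordinate precision $\delta$ to scale with the sequence length.
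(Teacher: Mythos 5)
Your proposal is correct and follows essentially the same route as the paper's proof: a zero message function, a component-wise $\log$ approximated via the Universal Approximation Theorem on the compact $[\alpha_\text{min},1]$ made available by the clipping hypothesis, sum-pooling, and a final head averaging the $H$ coordinates of layer $l$. Your error bookkeeping is in fact slightly more careful than the paper's, which silently normalises the pooled sum by $n$ (the averaging variant it mentions as standard practice) so that the per-token error $\delta$ need not scale with sequence length; you correctly note that under strict summation pooling one must take $\delta < \epsilon/\bar{n}$ for bounded $\bar{n}$.
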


\begin{proof}
    To prove the above we show that, in the setting described in the proposition, there exist a single-layer stack $f_\text{mp}$, as well as an MLP $f_\text{pred}$ which compute the desired approximation. Here we consider the \llmcheck{l} variant which averages across heads instead of performing a summation, but the proof below is easily extended to this alternative configuration.

    \textbf{(0) \textit{Setup, inputs and proof strategy.}} Our stack is made up, again, of one single message-passing layer in $f_\text{mp}$, followed by sum-based pooling and an MLP $f_\text{pred}$. Our specific interest is in showing the existence of appropriate MLPs $\text{msg}_0$, $\text{up}_0$, $f_\text{pred}$ enabling the full stack to realise the target approximation. Again, $\text{msg}_0$, $\text{up}_0$ are the components of the \emph{first} --- and only --- message-passing layer in $f_\text{mp}$, so its input node and edge representations effectively correspond to the original node and edge features $X_V, X_E$. The whole computation then takes the form:
    \begin{equation}\label{eq:totem-msg-pass-llmcheck-proof}
        y = \underbrace{f_\text{pred}}_{(3)} \Big( \sum_{i=0}^{n-1} \underbrace{\text{up}_0}_{(2)} \Big ( x_{V,i}, \sum_{(i,j) \in E} \underbrace{\text{msg}_0}_{(1)} \big ( x_{V,i}, x_{V,j}, x_{E, (i,j)}, p_{i,j} \big ) \Big ) \Big)
    \end{equation} 

    \textbf{(1) \textit{Message function.}} The \llmcheck{l} method does not perform any aggregation on the attention graph --- for our purposes it suffices for MLP $\text{msg}_0$ to simply implement a function which outputs any constant non-negative vector $v$. W.l.o.g., set $v=\vec{0}$; the MLP implementing $\text{msg}_0$ is trivially obtained by setting both its weights and biases to zero.
    
    \textbf{(2) \textit{Update function.}} Note that $x_{V,i} = \attgeneric{i}{i}$; as it will be clearer next, it suffices for $\text{up}_0$ to approximate the $\log$ function applied thereon component-wise. Now, $\log$ is, in fact, operating on domain $[\alpha_\text{min}, 1)$ in view of the applied clipping; there, the function is continuous. Consider the compact set $[\alpha_\text{min}, 1]$ obtained as the closure of the above domain. The considered $\log$ is trivially continuously extended to this new domain, since its limit exists finite as the argument approaches $1$ from the left (it evaluates to $0$). We therefore invoke the Universal Approximation Theorem~\citep{pinkus1999approximation}, which guarantees the existence of an MLP $M_{\log}$ approximating the component-wise $\log$ function on the extended domain and, in turn, on the original input domain $[\alpha_\text{min}, 1)$ up to arbitrary precision $\epsilon$. We construct $\text{up}_0$ by appropriately replicating the weights of $M_{\log}$ to output the approximated log-transform on each of the first input $d$ channels in parallel, whilst discarding the last $d$ remaining channels in output from the message function (see above).
    
    \textbf{(3) \textit{Prediction function.}} At this point, the input of $f_\text{pred}$ corresponds to: $\hat{z} = \frac{1}{n}\sum_{i=0}^{n-1} \hat{y}_i$, where $\hat{y}_i$ is an $\epsilon$-approximation of the $\log$ function applied element-wise to $\attgeneric{i}{i}$'s. If $f_\text{pred}$ implemented the final averaging over $l$'s $H$ heads (channels), then its output would be an overall approximation of the desired quantity $c^{l}$ (of a certain precision yet to be quantified). We note that it easy to explicitly construct such an MLP exactly implementing the outer averaging over the selected heads. We describe its two layers in the following.
    \begin{itemize}
        \item First layer: it features a weight matrix of the form $[I_d \mid -I_d]$ and a zero-valued bias; this layer expands the $d$-dimensional input to a $2d$ representation, where the the first $d$ channels host input $\hat{z}$, the last $d$ channels the negated input $-\hat{z}$.
        \item Second layer: it features a weight matrix in $\mathbb{R}^{2d \times 1}$. The upper $d \times 1$ block hosts a vector where entries $\flat(l,h)$ equal to $\nicefrac{1}{H}$ if $l = \bar{l}$, 0 otherwise. The lower $d \times 1$ block has the same exact structure, but non-zero entries are, instead, set to $-\nicefrac{1}{H}$.
    \end{itemize}
    It is easy to see this construction exactly implements the required averaging, making the $\text{ReLU}$ activaction act neutrally.

    Now, we ask to what precision does the final output approximate overall target $c^{l}$. Note that the only source of approximation is in the previously introduced $\text{up}_0$; we are thus only required to quantify how it \enquote{propagates} to the rest of the following computation. We have, by the triangular inequality:
    \begin{align}
        &\Big | \frac{1}{H}\sum_{h=0}^{H-1} \frac{1}{n}\sum_{i=0}^{n-1} \big(\hat{y}_i\big)_{\flat(l,h)} - \frac{1}{H}\sum_{h=0}^{H-1} \frac{1}{n}\sum_{i=0}^{n-1} \log(\att{l}{h}{i}{i}) \Big | \leq \nonumber \\
        &\quad \frac{1}{H}\sum_{h=0}^{H-1} \Big | \frac{1}{n}\sum_{i=0}^{n-1} \big(\hat{y}_i\big)_{\flat(l,h)} - \frac{1}{n}\sum_{i=0}^{n-1} \log(\att{l}{h}{i}{i}) \Big | \leq \nonumber \\
        &\qquad \frac{1}{H}\sum_{h=0}^{H-1} \frac{1}{n} \sum_{i=0}^{n-1}\Big| \big(\hat{y}_i\big)_{\flat(l,h)} - \log(\att{l}{h}{i}{i}) \Big| \leq \nonumber \\
        &\quad\qquad \frac{1}{H} \sum_{h=0}^{H-1} \frac{n\epsilon}{n} = \frac{Hn\epsilon}{Hn} = \epsilon \nonumber 
    \end{align}
    \noindent which concludes the proof.
\end{proof}

\section{Dataset Details}\label{app:data}

\subsection{NQ and CNN}\label{app:dataset_details_contextual}

\paragraph{Dataset construction.} These datasets are constructed precisely following the implementation described in~\citep{chuang2024lookback} and provided as part as a supplementary codebase at \url{https://github.com/voidism/Lookback-Lens}. From this repository we derive both prompts and pre-computed, annotated generations, which we re-use via teacher-forcing to hook out the required computational traces, namely attention and activation matrices. We tested the fidelity of these generated scores in early experiments: we recalculated original \look features using the generated data and managed to reproduce the original results in~\citep{chuang2024lookback}.

\paragraph{Dataset details}, including a description of the text generation and annotation process, are found in~\citep[Appendix A]{chuang2024lookback} and~\citep[Appendix C.2]{chuang2024lookback}, to which we refer the interested reader.

\paragraph{Splitting.} \citet{chuang2024lookback} originally split the data randomly and in a way that, potentially, passages from the same response could appear across training and evaluation splits. We argue this is an undesired side-effect and, in an effort to cast the HD in a more challenging setup, we instead split the data at the level whole prompt-response pairs (graphs according to our framework). Specifically, we fix the seed to $42$ and randomly obtain a prompt-response level split in the proportion $60 \%$ / $20 \%$ / $20 \%$ (train / val / test).

\subsection{Movies, Winobias, Math}\label{app:dataset_details_factual}

\paragraph{Dataset construction.} These datasets are constructed following the process described in~\citep{orgad2025llms}, and by leveraging the authors' code open-sourced at \url{https://github.com/technion-cs-nlp/LLMsKnow} (MIT License). The prompts and ground truth labels of all the three considered dataset, in particular, are provided by the authors themselves in the above codebase. As for hallucination labels, we run the annotation process whose code is provided therein. These annotation routines are mostly based on string matching procedures. 

\paragraph{Dataset details,} including a description of the datasets and how prompts have been derived are provided in~\citep[Appendix A.3]{orgad2025llms}, to which we refer the interested reader.

\paragraph{Splitting.} We use the same train/test splits provided by~\citet{orgad2025llms}, and additionally carve out a random sample of $20$\% of training data points, treated as our validation set. We perform this sampling by setting random seed to $42$.

\section{Extended Experimental Section}\label{app:exp_ext}

\subsection{Comparison with Self-Check and Multiple-Prompting-Based Methods}\label{app:comp_ptrue_se} In this section, we compare against baseline methods that rely on additional prompting, specifically P(True) \cite{kadavath2022language} and Semantic Entropy (SE) \cite{kuhn2023semantic}. Both approaches operate over multiple LLM generations or prompts, which introduces a non-negligible computational overhead and may hinder their applicability in real-time settings. \Cref{tab:Multipromp} reports results on the Movies dataset using \texttt{Mistral-7B-instruct}. For SE, we follow the original evaluation setup~\citep{kuhn2023semantic}, employing the DeBERTa entailment model as described in the referenced work.

\begin{table}[!h]
\centering
\caption{Comparison with methods relying on multiple prompting.}
\begin{tabular}{@{}lc@{}}
\toprule
\textbf{Method} & \textbf{Mis-7B -- Movies (AUC)} \\
\midrule
P(True) & 62.00 \\
Semantic Entropy & 70.06 \\
\midrule
\ourmethodshort (att) (ours) & \textbf{80.3{\scriptsize$\pm$0.2}} \\
\ourmethodshort (att+act-24) (ours) & \underline{79.7{\scriptsize$\pm$0.3}} \\
\bottomrule
\end{tabular}
\label{tab:Multipromp}
\end{table}

We observe that in all cases, \ourmethod variants substantially outperform the competing approaches. 
To quantify the computational burden of these baselines, we measured the average runtime of SE for producing a prediction. This process involves generating 10 additional responses and clustering them by computing mutual entailments with an auxiliary DeBERTa model. On average, SE required $5.9 \pm 1.7$ seconds per evaluation. We minimised the overhead of auxiliary generations by running them in parallel through batching. 
Nevertheless, the clustering step alone accounts for about $1.35$ seconds of runtime, which is not negligible. 
These findings highlight the advantage of our method, which not only achieves higher accuracy but also operates orders of magnitude faster, with detection runtimes on this dataset in the range of $10^{-4}$ seconds.

\revision{

\subsection{Impact of the Choice of Activation Layer(s)}\label{app:layer_choice}

Here we report results on the Math dataset in an effort to assess the sensitivity of \ourmethod across different choices of the activation layer(s). We train (and tune) \ourmethod with features from the layers considered for activation probes, namely $24$, $28$, $32$. We also experiment with considering all these layers jointly, with activations concatenated together across layers. Results are in~\Cref{tab:act_abla}.

\begin{table}[!h]
    \centering
    \revision{
    \caption{\revision{Performance of \ourmethod with different choices of activation layers.}}
    \label{tab:act_abla}
    \begin{tabular}{lcc}
    \toprule
    \textbf{Method} &
    \multicolumn{2}{c}{\textbf{Math}} \\
    & AUROC & AUPR \\
    \midrule
    \actprobe{24}
        & 77.7
        & 77.5 \\
    \actprobe{28}
        & 78.1
        & 77.8 \\
    \actprobe{32}
        & 76.6
        & 77.9 \\
    \midrule
    \ourmethod (att)
        & 76.5 {\scriptsize$\pm$1.1}  
        & 79.7 {\scriptsize$\pm$0.5} \\
    \ourmethod (att+act-24)
        & 80.8 {\scriptsize$\pm$0.7}  
        & 83.1 {\scriptsize$\pm$0.7} \\
    \ourmethod (att+act-28)
        & 81.2 {\scriptsize$\pm$1.0}  
        & 83.4 {\scriptsize$\pm$1.3} \\
    \ourmethod (att+act-32)
        & 81.7 {\scriptsize$\pm$0.2}  
        & 83.8 {\scriptsize$\pm$0.3} \\
    \ourmethod (att+act-(24,28,32))
        & \textbf{82.7} {\scriptsize$\pm$0.1}  
        & \textbf{84.0} {\scriptsize$\pm$0.5} \\
    \bottomrule
    \end{tabular}}
\end{table}

\ourmethod remains robust w.r.t.\ the chosen activation layer and we observe that concatenating multiple layers together may further improve performance.

\subsection{Performance by Length}\label{app:by_length}

We run additional experiments on the CNN dataset to better study how model performance, inference run-time and memory footprint vary as a function of the processed sequence lengths.

\begin{figure}[!h]
    \centering
    \includegraphics[width=0.45\textwidth]{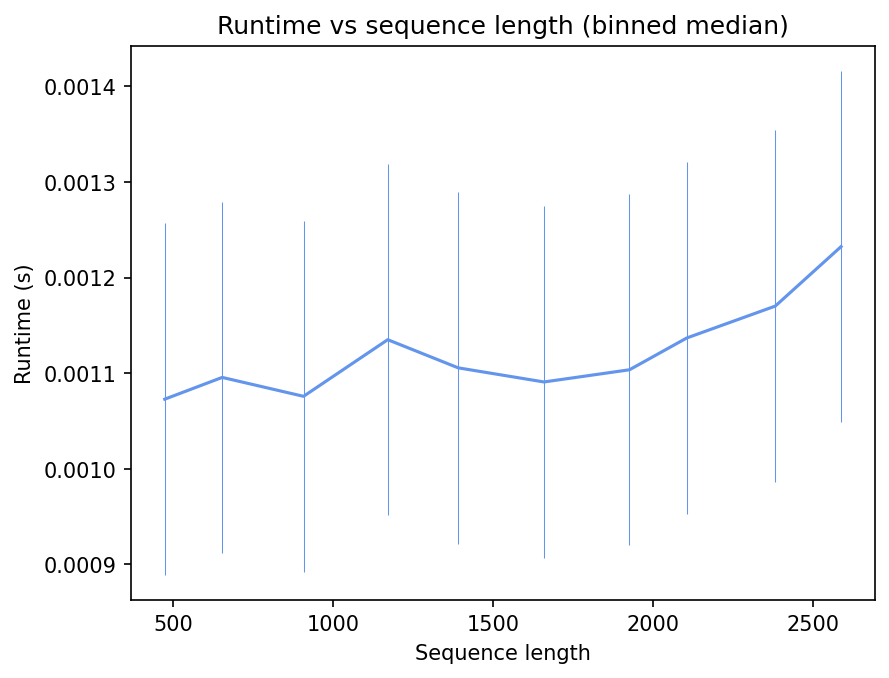} 
    \includegraphics[width=0.45\textwidth]{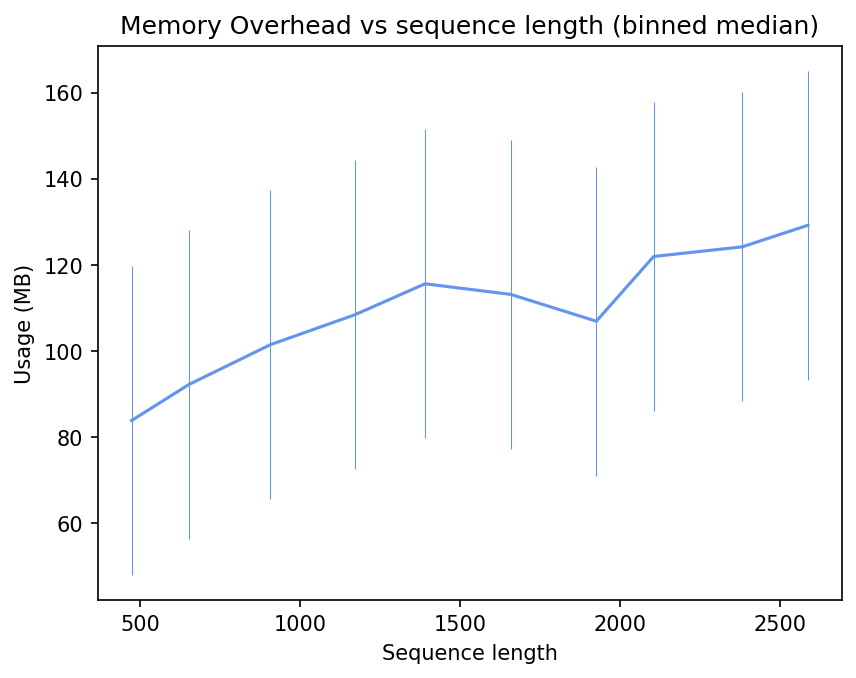} 
    \caption{\revision{Inference runtime (left) and memory consumption (right) by sequence length on the CNN dataset. Values are grouped in 10 bins; x-ticks report median length per bin; the y-axis reports median measurements per bin, as well as the corresponding inter-quartile range.}}
    \label{fig:time_mem_by_length}
\end{figure}

\paragraph{Runtime and memory consumption by length} have their trends reported in~\Cref{fig:time_mem_by_length}. These results show extremely contained run-times and memory consumptions even for lengths in the order of thousands of tokens. Overall, runtime and memory consumption scale very favourably in the considered range, underscoring the crucial computational advantage of running neural message-passing on \emph{sparsified} attention graphs.

\begin{figure}[!h]
    \centering
    \includegraphics[width=0.45\textwidth]{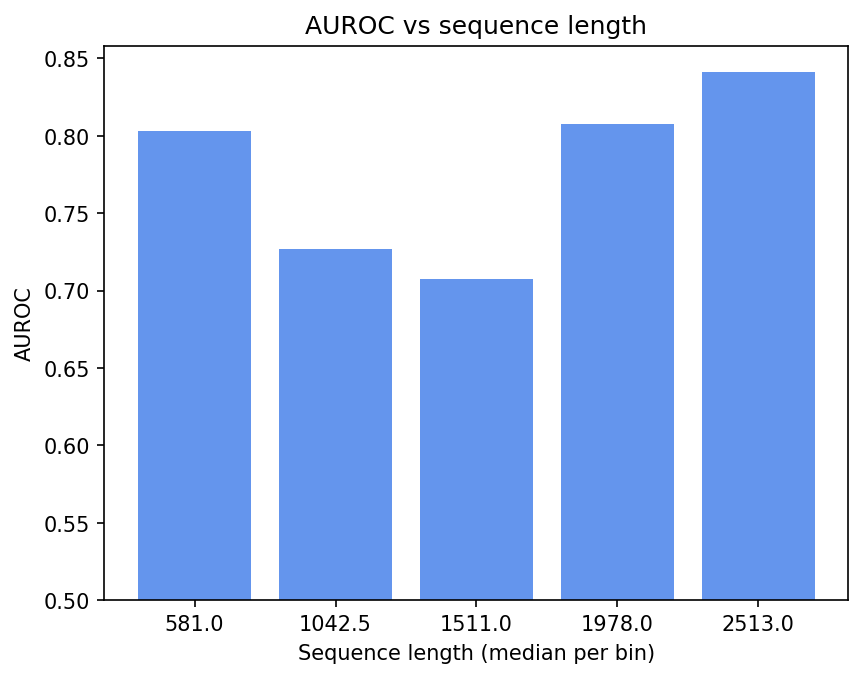} 
    \includegraphics[width=0.45\textwidth]{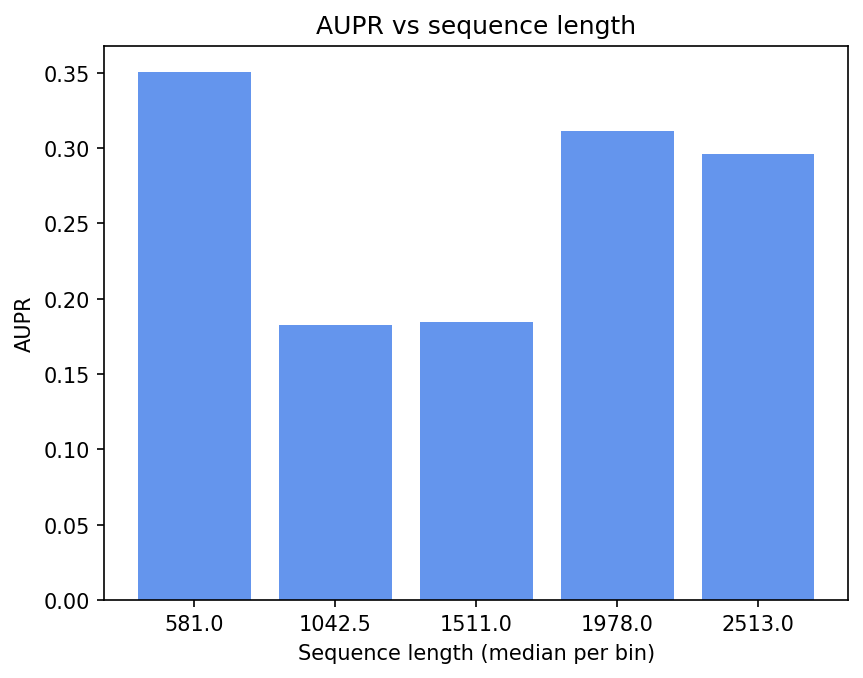} 
    \caption{\revision{Test performance by sequence length on the CNN dataset: AUROC (left), AUPR (right). Values are grouped in 5 bins; x-ticks report median length per bin; the y-axis reports the performance within each bin.}}
    \label{fig:perf_by_length}
\end{figure}

\paragraph{Performance by length} is reported in~\Cref{fig:perf_by_length} both in terms of AUROC and AUPR. We observe that \ourmethod's performance does not degrade with increasing sequence length; in fact, the highest AUROC is achieved in the longest-sequence bin. Overall, these results suggest that \ourmethod is not particularly sensitive to sequence length.

\paragraph{Generalising to longer sequences at test time.} 

We also experimented with training on shorter sequences and testing on longer ones, again on the CNN dataset. We produced a new non-uniform split whereby test samples are taken as the top $20\%$ longest sequences, the rest considered for training and validation sets. This way, training and model selection is run on sequences only up to $\sim 1.2$k tokens, while the model is tested on sequences of length ranging from this value to $\sim 2.7$k tokens. Results are reported in~\Cref{tab:CNN_gen_by_length}, comparing \ourmethod with other \actprobe{*} and \look baselines.

\begin{table}[!h]
    \centering
    \revision{
    \caption{\revision{Performance of \ourmethod on CNN, split by size (test set contains unseen longer sequences).}}
    \label{tab:CNN_gen_by_length}
    \begin{tabular}{lcc}
    \toprule
    \textbf{Method} &
    \multicolumn{2}{c}{\textbf{CNN (longer seqs.)}} \\
    & AUROC & AUPR \\
    \midrule
    \actprobe{24}
        & 71.6
        & 16.3 \\
    \actprobe{28}
        & 70.3
        & 16.0 \\
    \actprobe{32}
        & 68.4
        & 14.1 \\
    \look
        & 73.9
        & 18.9 \\
    \midrule
    \ourmethod (att)
        & \textbf{74.5} {\scriptsize$\pm$0.7}  
        & \textbf{21.6} {\scriptsize$\pm$1.2} \\
    \bottomrule
    \end{tabular}}
\end{table}

We observe that CHARM still outperforms methods in comparisons on both metrics, and that its performance on unseen sequence lengths remains stable, only marginally lower than what obtained on uniform splits (\Cref{tab:nq_cnn_results}).

}

\subsection{Hyperparameter Grids}\label{app:hypers}

We employed the same hyperparameter grid search across all datasets considered for \ourmethod, as summarized in \Cref{tab:hyperparams}. When incorporating activations into \ourmethod, we additionally searched over a separate weight decay parameter, applied only to the encoder of the activations, with candidate values $\{0.0, 0.05, 0.1\}$.

\begin{table}[h!]
\centering
\caption{Hyperparameter search space for \ourmethod.}
\begin{tabular}{lc}
\toprule
\textbf{Hyperparameter} & \textbf{Values} \\
\midrule
Learning Rate & $\{ 0.001, 0.0005 \} $\\
Learning Rate Sched. & $\{ \text{Reduce On Plateau},\ \text{Cosine w/ Warmup} \} $\\
Batch Size & 32 \\
Dropout & $\{ 0.25, 0.5 \} $\\
Hidden Dimension & $\{ 32, 64, 128\}$ \\
Number of Layers & $\{ 1, 2, 3\}$ \\
Weight Decay & $\{ 0.0, 0.001 \}$\\
BatchNorm & $\{ \text{yes}, \text{no} \}$\\
Residual Connections & $\{ \text{yes}, \text{no} \}$\\
\bottomrule
\end{tabular}
\label{tab:hyperparams}
\end{table}

\subsubsection{Baseline Hyperparameter Searches}\label{app:hypers_base}

All hyperparameters were selected based on validation performance and, in particular, in order to maximise the AUPR metric. The details are provided below.

\paragraph{\probas.} We evaluated different readout functions --- mean, max, and sum --- applied to the next-token probabilities.

\paragraph{\actprobe{*}.}
We experimented with the following regularisation parameters for logistic regression: $C \in \{10^{-8}, 10^{-7}, 10^{-6}, 10^{-5}, 10^{-4}, 10^{-3}, 10^{-2}, 10^{-1}, 1, 10, 100, 10^5\}$. In addition, we probed token positions in: $\{-3, -2, -1, 0, 1, 2 \}$.

\paragraph{\llmcheck{*}.} We were required to clamp the attention scores from below using $\epsilon = 10^{-6}$ to avoid numerical errors. For \llmcheckpp{*}, we experimented with $C \in \{10^{-8}, 10^{-7}, 10^{-6}, 10^{-5}, 10^{-4}, 10^{-3}, 10^{-2}, 10^{-1}, 1, 10, 100, 10^5\}$.

\paragraph{\lapeig.} We experimented with the following values of $k$: $\{4, 5, 6, 7, 8, 9, 10, 11, 12, 15, 20\}$. For datasets where the minimum number of tokens in the test split was less than $k$, we restricted experiments to values of $k$ below this threshold. As for logistic regression, we used $C=1$, class balancing, and a maximum of $2,000$ iterations, consistent with what prescribed in the original paper~\citep{sriramanan2024llmcheck}.

\paragraph{\nodeavg and \edgeavg.} We used mean readout exactly as in our model and tuned the logistic regression regularisation parameter over $C \in \{10^{-8}, 10^{-7}, 10^{-6}, 10^{-5}, 10^{-4}, 10^{-3}, 10^{-2}, 10^{-1}, 1, 10, 100, 10^5\}$.

\paragraph{\look.} We implemented \look exactly as described in the original paper, using logistic regression with a regularization parameter of $C=1$. For \look$^\dagger$, we performed a grid search over $C \in \{10^{-8}, 10^{-7}, 10^{-6}, 10^{-5}, 10^{-4}, 10^{-3}, 10^{-2}, 10^{-1}, 1, 10, 100, 10^5\}$.

\paragraph{\ourmethod (no g.).} We use the same exact grid as for \ourmethod (see~\Cref{tab:hyperparams}), with the following exceptions: (1) no msg-passing layers; (2) a readout / prediction head that is either linear or a implemented as an MLP.

\section{Implementation Details and Computational Resources}\label{app:impl}

\subsection{Detailed Architectural Forms and Training Parameters}

\subsubsection{Neighbourhood Averaging Baselines}\label{app:custom_baselines}

Our baselines \nodeavg and \edgeavg calculate features in a non-learnable way as described below.

\paragraph{\nodeavg.}
\begin{align}\label{eq:nodeavg-msg-pass}
    h^{(1)}_{i} &= \frac{1}{1+\text{deg}_{\text{in}}(i)} \Big ( h^{(0)}_{i} + \sum_{j:\ (i,j) \in E} x_{E,(i,j)} \Big ) = \nonumber \\
    &\quad = \frac{1}{1+\text{deg}_{\text{in}}(i)} \Big ( x_{V,i} + \sum_{j:\ (i,j) \in E} x_{E,(i,j)} \Big ) = \nonumber \\
    &\qquad = \frac{1}{1+\text{deg}_{\text{in}}(i)} \Big ( \attvec{i}{i} + \sum_{j:\ (i,j) \in E} \attvec{j}{j} \Big )
\end{align}

\paragraph{\edgeavg.}
\begin{align}\label{eq:edgeavg-msg-pass}
    h^{(1)}_{i} &= \frac{1}{1+\text{deg}_{\text{in}}(i)} \Big ( h^{(0)}_{i} + \sum_{j:\ (i,j) \in E} h^{(0)}_j \Big ) = \nonumber \\
    &\quad = \frac{1}{1+\text{deg}_{\text{in}}(i)} \Big ( x_{V,i} + \sum_{j:\ (i,j) \in E} x_{V,j} \Big ) = \nonumber \\
    &\qquad = \frac{1}{1+\text{deg}_{\text{in}}(i)} \Big ( \attvec{i}{i} + \sum_{j:\ (i,j) \in E} \attvec{i}{j} \Big )
\end{align}

Outputs $h^{(1)}_{i}$ are then fed in input to a logistic regression model, regularised as illustrated in~\Cref{app:hypers_base}. Before that, they are averaged-pooled in the case of response-wise predictions tasks. 

\subsubsection{Experimental \ourmethod Form}\label{app:exp_form}

Throughout all our experiments, \ourmethod implements the following msg-passing equation:
\begin{equation}\label{eq:charm-msg-pass-actual}
    h^{(t+1)}_{i} = \text{up}_t \Big ( \Big[ h^{(t)}_{i} \mid \frac{1}{\text{deg}_{\text{in}}(i)} \sum_{j:\ (i,j) \in E} \text{msg}_t \big ( \big [ h^{(t)}_j \mid x^{\tau}_{E, (i,j)} \mid p_{i,j} \big ] \big ) \Big ] \Big ).
\end{equation}

Initial node features always include \enquote{reflexive attention}, i.e., that a token pays to itself. When additionally including activations (\ourmethod (att+act-*)) we employed an additionally encoder to preprocess these. The output of this module is concatenated to the original attention features before message passing takes place. Note that, in all our experiments, for computational reasons and in alignment with the computational flow of \look, we remove all connections from prompt to prompt tokens.

\subsubsection{Optimizer and Schedulers}

For all datasets and tasks, we use the AdamW optimizer \cite{loshchilov2019decoupled}. We experimented with two learning rate schedulers (see~\Cref{tab:Multipromp}): \enquote{Reduce On Plateau} and \enquote{Cosine Annealing with Warmup}, where warmup spanned 10\% of the total training steps. The scheduler yielding the best validation performance was selected.

\subsection{Code Implementation}

Thee implementation of \ourmethod was realised by means of PyTorch~\citep{pytorch} and PyTorch Geometric~\citep{fey2019fast} (available respectively under the BSD and MIT license). We performed hyperparameter tuning using the Weight and Biases framework~\citep{wandb}. For baselines and models running logistic regression, we resorted to the implementation exposed by the Sci-Kit Learn library (BSD license). \lapeig required also running the PCA dimensionality reduction technique; we invoked the python implementation from the same library.

\subsection{Experimental Resources and Artefacts}

We ran our all our experiments on NVIDIA L40 GPUs. The two employed LLMs were both accessed via Hugging Face python API, in particular:
\begin{itemize}
    \item \texttt{LLaMa-2-7b-chat}~\citep{touvron2023llama2} (License: LLaMa 2 Community License). Accessed at \url{https://huggingface.co/meta-llama/Llama-2-7b-chat-hf}.
    \item \texttt{Mistral-7b-instruct}~\citep{jiang2023mistral} (License: Apache-2.0). Accessed at \url{https://huggingface.co/mistralai/Mistral-7B-Instruct-v0.2}\footnote{N.B.: This corrects a typo in our ICLR 2026 camera ready version, where we had incorrectly reported we accessed the v.03 model. We, in fact, considered the v0.2 version consistently with~\citep{orgad2025llms}.}.
\end{itemize}

\revision{
\section{Visualisations}\label{app:viz}

We provide here two sample visualisations of the constructed computational graphs in input to our \ourmethod architecture. These illustrate two \emph{test} data points in the NQ dataset and are reported in \Cref{fig:1590,fig:2354}, where we show the full computational graph (left) along with a zoom-in on the response tokens (right).

These visualisations arrange prompt tokens on the left, and response tokens on the right. Edges are drawn in a way that their thickness and transparency is proportional to the corresponding attention scores, averaged, for illustrative purposes, across layers and heads. Response tokens have their border coloured according to the ground truth --- red: the token is in an hallucinated passage; blue: otherwise. Their interior is filled, instead, with a colour that conveys the model prediction --- ``more red'': the token is more likely to be in an hallucinated passage; ``more blue'': more likely to be in a correct, non-hallucinated passage. Output scores for our model are matched to this colour map linearly after a min-max normalisation. Note the absence of prompt-to-prompt edges, as they are neglected in our experiments as explained at the end of~\Cref{app:exp_form}.

\begin{figure}[p]
    \centering
    \includegraphics[
        trim=0 400 0 400,
        clip,
        height=0.9\textheight,
        valign=c
    ]{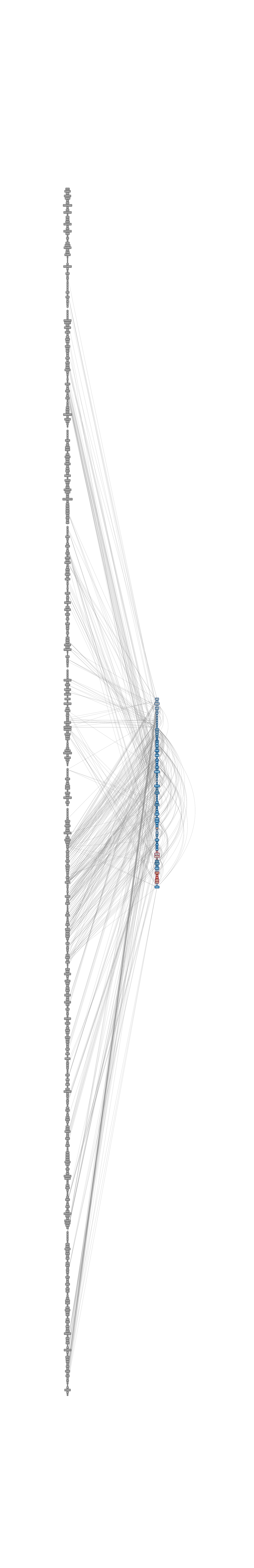}
    % \hfill
    \includegraphics[
        trim=0 1800 100 1800,
        clip,
        width=0.7\textwidth,
        valign=c
    ]{figures/1590_mean.pdf}
    \caption{\revision{Visualisation of test sample 1590, along with token-wise labels and predictions.}}
    \label{fig:1590}
\end{figure}

\begin{figure}[p]
    \centering
    \includegraphics[
        trim=0 400 0 400,
        clip,
        height=0.9\textheight,
        valign=c
    ]{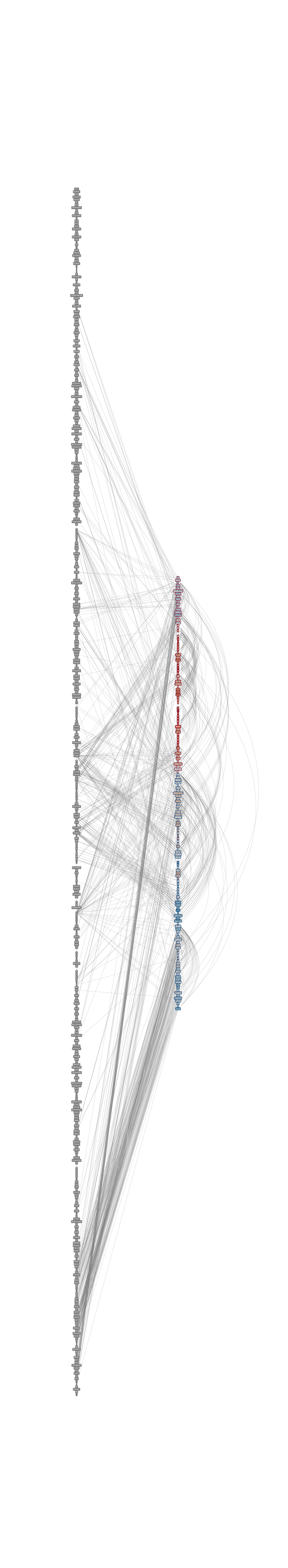}
    % \hfill
    \includegraphics[
        trim=0 1350 100 1400,
        clip,
        width=0.6\textwidth,
        valign=c
    ]{figures/2354_mean.pdf}
    \caption{\revision{Visualisation of test sample 2354, along with token-wise labels and predictions.}}
    \label{fig:2354}
\end{figure}

}

\section{Large Language Model (LLM) Usage}
We employed large language models (LLMs) to support the writing process, specifically for improving clarity in technical explanations, refining grammar and style, and enhancing overall readability. LLMs were also used to a limited extent to aid the process of finding related works. All research contributions, including the design of experiments, data analysis, and conclusions, are entirely our own. The LLMs were used strictly as writing aids to improve presentation quality, not for generating research content or shaping the substance of our work.

\end{document}